\pgfplotsset{compat=1.18}
\renewcommand{\hat}{\widehat}
\newcommand{\Itrain}{\mathcal I_{\textrm{train}}}
\newcommand{\Ical}{\mathcal I_{\textrm{cal}}}
\newcommand{\Itest}{\mathcal I_{\textrm{test}}}
\newtheorem{prop}{Proposition}
\definecolor{matblue}{HTML}{1F77B4}
\newtheorem{theorem}{Theorem}
\newtheorem*{remark}{Remark}
\newtheorem{lemma}{Lemma}
\newcounter{protocol}%
\newcounter{algorithm saved}%
\title{Conformal Thresholded Intervals for Efficient Regression}
\author {
    Rui Luo\textsuperscript{\rm 1}\thanks{Corresponding author}  and
    Zhixin Zhou\textsuperscript{\rm 2}
}
\begin{document}

\maketitle

\begin{abstract}
This paper introduces Conformal Thresholded Intervals (CTI), a novel conformal regression method that aims to produce the smallest possible prediction set with guaranteed coverage. Unlike existing methods that rely on nested conformal frameworks and full conditional distribution estimation, CTI estimates the conditional probability density for a new response to fall into each interquantile interval using off-the-shelf multi-output quantile regression. By leveraging the inverse relationship between interval length and probability density, CTI constructs prediction sets by thresholding the estimated conditional interquantile intervals based on their length. The optimal threshold is determined using a calibration set to ensure marginal coverage, effectively balancing the trade-off between prediction set size and coverage. CTI's approach is computationally efficient and avoids the complexity of estimating the full conditional distribution. The method is theoretically grounded, with provable guarantees for marginal coverage and achieving the smallest prediction size given by Neyman-Pearson . Extensive experimental results demonstrate that CTI achieves superior performance compared to state-of-the-art conformal regression methods across various datasets, consistently producing smaller prediction sets while maintaining the desired coverage level. The proposed method offers a simple yet effective solution for reliable uncertainty quantification in regression tasks, making it an attractive choice for practitioners seeking accurate and efficient conformal prediction.
\end{abstract}

\begin{links}
    \link{Code}{https://github.com/luo-lorry/CTI}
    \link{Extended version}{https://arxiv.org/abs/2407.14495}
\end{links}

\section{Introduction}
Conformal prediction is a powerful framework for constructing prediction sets with finite-sample coverage guarantees. By leveraging exchangeability of the data, conformal methods can convert the output of any machine learning algorithm into a set-valued prediction satisfying the required coverage level, without assumptions on the data distribution. This paper develops a novel conformal prediction method for regression that aims to produce the smallest possible prediction set with guaranteed coverage.

Most existing conformal methods for regression either directly predict the lower and upper endpoints of the interval using quantile regression models \cite{romano2019conformalized, kivaranovic2020adaptive,sesia2020comparison,gupta2022nested} or first estimate the full conditional distribution of the response and then invert it to obtain prediction sets \cite{izbicki2020flexible,chernozhukov2021distributional}. While these approaches perform well in many situations, they may produce sub-optimal prediction sets if the conditional distribution is skewed. Conformal quantile regression typically yields equal-tailed intervals, but the shortest valid interval may be unbalanced. On the other hand, density-based methods can adapt to skewness but typically involve many tuning parameters and more difficult interpretation, which can be complex for practitioners.

To address these limitations, we propose conformal thresholded intervals (CTI), a conformal inference method that seeks the smallest possible prediction set. Instead of relying on an estimate of the full conditional distribution, we use off-the-shelf multi-output quantile regression and construct prediction set by thresholding the estimated conditional interquantile intervals. Compared with conformal histogram regression (CHR) \cite{sesia2021conformal}, which first partitions the response space into bins, CTI directly trains a multi-output quantile regression model that uses equiprobable quantiles. This allows us to estimate the conditional probability density for a new response to fall into each interquantile interval, without the need for explicitly binning the response space.

For each sample in the calibration set, we obtain the interquantile interval that its response falls into and find the corresponding probability density estimate. We compute the non-conformity scores based on these estimates. Intuitively, the non-conformity score is higher for a sample that falls into a long interquantile interval and lower for a sample that falls into a short interquantile interval. By adopting a similar thresholding idea as in conformal classification \cite{sadinle2019least, luo2024trustworthy}, we threshold the intervals according to their length, the inverse of which corresponds to the probability density estimate. At test time, the threshold, i.e., the quantile for non-conformity scores, is used in constructing prediction sets for test samples. Specifically, the interquantile intervals are sorted in ascending order of length, and the first ones shorter than or equal to the threshold are kept. We show that the prediction sets generated from thresholding interquantile intervals guarantee marginal coverage and can achieve desired conditional coverage as well as the smallest expected prediction interval length if the multi-output quantile regression model produces true conditional probability density estimates.

The main contributions of this paper are as follows: 
\begin{enumerate}
    \item We propose a novel conformal prediction method for regression, CTI, that aims to produce the smallest possible prediction set with guaranteed coverage. CTI leverages multi-output quantile regression to estimate conditional interquantile intervals and constructs prediction sets by thresholding these intervals.
    \item We provide a theoretical analysis showing that CTI guarantees marginal coverage and can achieve desired conditional coverage as well as the smallest expected prediction interval length under certain conditions.
    \item We conduct extensive numerical experiments on both simulated and real datasets, demonstrating that CTI compares favorably to existing conformal regression methods in terms of prediction set size while maintaining valid coverage.
    \item The CTI method is simple to implement and interpret, making it an attractive choice for practitioners seeking reliable uncertainty quantification in regression tasks.
\end{enumerate}

The rest of this paper is organized as follows. We discuss related work in Section~\ref{sec:related:work}. Section~\ref{sec:preliminary} introduces the problem setup and competitive methods. Section~\ref{sec:cti} describes the proposed CTI method in detail. Section~\ref{sec:theory} presents a theoretical analysis of CTI. Section~\ref{sec:exp} provides numerical experiments comparing CTI to existing conformal regression methods on both simulated and real data. Finally, Section~\ref{sec:conclusion} concludes with a discussion of the main results and future directions.

\section{Related Work}
\label{sec:related:work}
Quantile regression \cite{koenker2005quantile} estimates the $\tau$-th conditional quantile function by minimizing the check function loss:
\begin{align*}
    \min_{f_{\tau}} \sum_{i=1}^{n} \rho_{\tau} (y_i - f_{\tau}(x_i)),
\end{align*}
where 
\begin{align*}
    \rho_{\tau}(r) = \begin{cases}
        \tau r, & \textrm{if $r > 0$} \\
        -(1-\tau) r, & \textrm{otherwise}
    \end{cases}
\end{align*}
is the check function representing the absolute loss. 

Quantile regression has been widely used to construct prediction intervals by estimating conditional quantile functions at specific levels, such as the 5\% and 95\% levels for 90\% nominal coverage \cite{hunter2000quantile, taylor2000quantile, meinshausen2006quantile, takeuchi2006nonparametric, steinwart2011estimating}. This approach adapts to local variability, even for highly heteroscedastic data. 

Simultaneous estimation of multiple quantiles is asymptotically more efficient than separate estimation of individual regression quantiles or ignoring within-subject dependency \cite{cho2017multiple}. However, this approach does not guarantee non-crossing quantiles, which can affect the validity of the predictions and introduce critical issues in certain scenarios. To address this limitation, research on non-crossing multiple quantile regression has gained attention in recent years, with several methods proposed to ensure non-crossing quantile estimates, including stepwise approaches \cite{liu2009stepwise}, non-parametric techniques \cite{cannon2018non}, and deep learning-based models \cite{moon2021learning, brando2022deep}.

However, the validity of the produced intervals is only guaranteed for specific models under certain regularity and asymptotic conditions \cite{steinwart2011estimating, takeuchi2006nonparametric, meinshausen2006quantile}. Many related methods for constructing valid prediction intervals can be encompassed within the nested conformal prediction framework, where a nested sequence of prediction sets is generated by thresholding nonconformity scores derived from various approaches, such as residual-based methods \cite{papadopoulos2002inductive, balasubramanian2014conformal, lei2018distribution}, quantile regression \cite{romano2019conformalized, kivaranovic2020adaptive, sesia2020comparison, chernozhukov2021distributional}, density estimation \cite{izbicki20a, sesia2021conformal, izbicki2022cd}, and their combinations with ensemble methods \cite{gupta2022nested} and localized methods \cite{papadopoulos2008normalized, colombo2023training, luo2024conformal}. However, as noted by \cite{lei2013distribution}, the optimal conditionally-valid prediction regions are level sets of conditional densities, which need not be intervals, suggesting that constructing possibly non-convex prediction sets might lead to more efficient conformal predictors.

Our proposed method for constructing non-convex prediction sets is related to the work of \cite{izbicki2022cd}, who introduce a profile distance to measure the similarity between features and construct prediction sets based on neighboring samples. 
In contrast, our method directly estimates the conditional probability density for a new response to fall into each interquantile interval based on a multi-output quantile regression model. By thresholding the interquantile intervals based on their length, which is inversely proportional to the estimated probability density, we can construct efficient prediction sets that adapt to the local density of the data. This approach allows us to generate prediction sets that are not restricted to intervals and can potentially achieve better coverage and efficiency compared to interval-based methods.

Another related approach \cite{guha2024conformal} converts regression to a classification problem and employs a conditional distribution with a smoothness-enforcing penalty. This method is orthogonal to our approach and can be potentially combined with our multi-output quantile regression framework to further improve the efficiency of the constructed prediction sets.

\section{Preliminary and Problem Setup}\label{sec:preliminary}

In this section, we first provide an overview of conformal prediction methods for regression problems in the literature. We then introduce a simple setting for studying conformal interval arithmetic, where the calibration and test sets are at a group level. In this setting, we propose using the residual of sums to perform conformal prediction for the test group.

\subsection{Conformal Prediction for Regression}\label{sec:conformal_regression}

Conformal prediction \cite{vovk2005algorithmic, shafer2008tutorial} is a framework for constructing prediction intervals with guaranteed marginal coverage in regression problems. It leverages \emph{conformity scores} to measure how well each sample fits the model and uses the empirical distribution of these scores from a calibration set to determine prediction interval sizes for new test points.

Given a dataset $\{(x_i, y_i)\}_{i=1}^n$, split into training, calibration, and test sets with index sets $\mathcal{I}_{\text{train}}, \mathcal{I}_{\text{cal}}, \mathcal{I}_{\text{test}}$, a model $\hat{f}$ is first trained on $\mathcal{I}_{\text{train}}$ to produce predictions $\hat{y}_i = \hat{f}(x_i)$. Conformity scores $S_i$ are then computed for the calibration data:
\begin{align}\label{eq:score:function}
S_i = s(x_i, y_i), \quad \forall i \in \mathcal{I}_{\text{cal}},
\end{align}
where $s$ is a score function. In the \emph{split conformal method} \cite{papadopoulos2002inductive, vovk2005algorithmic}, the score is typically the absolute residual:
\begin{align}\label{eq:split_score}
S_i^{\text{Split}} = |y_i - \hat{f}(x_i)|.
\end{align}
For a new test point $x_{n+1}$, the $1-\alpha$ prediction interval is:
\begin{align}\label{eq:split_interval}
\mathcal{C}_{n,\alpha}(x_{n+1}) = \{y \in \mathbb{R} : s(x_{n+1}, y) \leq t_{1-\alpha}^{\text{Split}}\},
\end{align}
where $t_{1-\alpha}^{\text{Split}}$ is the $(1-\alpha)(1+1/|\mathcal{I}_{\text{cal}}|)$-th empirical quantile of $\{S_i^{\text{Split}}\}_{i \in \mathcal{I}_{\text{cal}}} \cup \{\infty\}$. Under the exchangeability assumption, this guarantees marginal coverage:
\begin{align}\label{eq:marginal_coverage}
\mathbb{P}(Y_{n+1} \in \mathcal{C}(x_{n+1})) \geq 1-\alpha.
\end{align}

Two advanced methods, \emph{Conformal Quantile Regression (CQR)} \cite{romano2019conformalized} and \emph{Conformal Histogram Regression (CHR)} \cite{sesia2021conformal}, extend this framework:
\begin{itemize}
    \item \textbf{CQR} constructs intervals based on quantile regression estimates:
    \begin{align}\label{eq:cqr_interval}
    \mathcal{C}^{\text{CQR}}(x_{n+1}) = \left[\hat{q}_{\frac{\alpha}{2}}(x_{n+1}) - t_{1-\alpha}^{\text{CQR}}, \hat{q}_{1-\frac{\alpha}{2}}(x_{n+1}) + t_{1-\alpha}^{\text{CQR}}\right],
    \end{align}
    where $\hat{q}_{\frac{\alpha}{2}}$ and $\hat{q}_{1-\frac{\alpha}{2}}$ are conditional quantile estimates, and $t_{1-\alpha}^{\text{CQR}}$ is the $(1-\alpha)(1+1/|\mathcal{I}_{\text{cal}}|)$-th empirical quantile of $\{S_i^{\text{CQR}}\}_{i \in \mathcal{I}_{\text{cal}}} \cup \{\infty\}$, with:
    \begin{align}\label{eq:cqr_score}
    S_i^{\text{CQR}} = \min\big(\hat{q}_{\alpha/2}(x_i) - y_i, y_i - \hat{q}_{1-\alpha/2}(x_i)\big).
    \end{align}

    \item \textbf{CHR} constructs prediction intervals by estimating the full conditional density $f_{Y|X}$ using histograms and finding the shortest interval $(a, b)$ such that:
    \begin{align}\label{eq:chr_interval}
    \mathcal{C}^{\text{CHR}}(x_{n+1}) = \arg\min_{a < b} (b - a), \\ 
    \text{s.t.} \int_a^b \hat{f}_{Y|X}(y|x_{n+1}) \, dy \geq 1-\alpha.
    \end{align}
\end{itemize}

Both methods improve efficiency (tighter intervals) compared to the split conformal method, with CQR adapting to local variability and CHR accommodating non-standard distributions.

\subsection{Problem Setup}\label{sec:problem_setup}

We consider a regression problem with a dataset $\{(x_i, y_i)\}_{i=1}^n$, where $x_i \in \mathcal{X} \subseteq \mathbb{R}^d$ and $y_i \in \mathcal{Y} \subseteq \mathbb{R}$. The dataset is split into training, calibration, and test sets with index sets $\mathcal{I}_{\text{train}}, \mathcal{I}_{\text{cal}}, \mathcal{I}_{\text{test}}$. We assume the calibration and test samples are exchangeable.

The goal is to construct a \emph{conformal predictor} that outputs a prediction set $\mathcal{C}(x) \subseteq \mathcal{Y}$ for each test input $x$, such that the true response $y$ satisfies:
\begin{align}\label{eq:coverage_goal}
\mathbb{P}(Y \in \mathcal{C}(X)) \geq 1-\alpha,
\end{align}
where $\alpha \in (0, 1)$ is a user-specified significance level. While ensuring valid marginal coverage, we aim to minimize the expected size of the prediction sets:
\begin{align}\label{eq:minimize_size}
\mathbb{E}[\mu(\mathcal{C}(X))] = \int_\mathcal{X} \mu(\mathcal{C}(x)) \, dP(x),
\end{align}
where $\mu$ is the Lebesgue measure and $P(x)$ is the marginal distribution of $X$.

Existing methods like CQR and CHR produce interval-based prediction sets, which may be suboptimal for non-unimodal or non-symmetric conditional distributions. Our objective is to develop a more flexible approach that generates general, possibly non-convex prediction sets, improving efficiency while maintaining valid coverage.

\section{Conformal Thresholded Interquantile Intervals}
\label{sec:cti}
First, we apply quantile regression on the training set $\mathcal{D}_{\text{train}}$ to predict the $\tau$-th quantile of the conditional distribution $Y|X=x$ for every $x\in \mathcal X$, where $\tau$ takes values from 0 to 1 in increments of $1/K$. The estimated quantile for $\tau=k/K$ is denoted by 
\begin{align}\label{eq:quantile:estimation}
\hat q_k(x) \quad\text{for}\quad k=0,1, \dots, K.
\end{align}
We then define the interquantile intervals as 
\begin{align}\label{eq:interquantile:estimation}
    I_k(x) = ( \hat q_{k-1}(x), \ \hat q_{k}(x) ]\quad\text{for}\quad k= \ 1,\dots, K.
\end{align}
Assuming the quantile regression provides sufficiently accurate estimations, each interval should have approximately the same probability, $1/K$, of covering the true label $Y$. To minimize the size of the prediction set, it is more efficient to include intervals with smaller sizes. This strategy leads us to define the confidence set as:
\begin{align}\label{eq:confidence:set}
    \mathcal C(x) = \bigcup\{I_k(x): \mu(I_k(x))\le t, k=1,\dots, K\},
\end{align}
where $t$ is a threshold determined in a marginal sense, meaning it is independent of $x$. To determine $t$, we utilize the calibration set. We want $t$ to satisfy the condition that $y_i\in\mathcal C(x_i)$ for at least $\lceil (1+|\mathcal I_{\text{cal}}|)(1-\alpha)\rceil$ instances in the calibration set, where $i\in  \mathcal I_{\text{cal}}$. We define the threshold $t$ as follows:
\begin{align}\label{eq:threshold}
    \begin{split}
    t &= (1 - \alpha)\text{-th quantile of the empirical distribution} \\
    & \frac{1}{(1 + |I_\text{cal}|)} \sum_{i \in I_\text{cal}} \delta_{\mu(I_{k(y_i)}(x_i))} + \delta_{\infty}
    \end{split}
\end{align}
where $k(y)$ is the index that of the interval that $y$ belongs, i.e., $y\in I_{k(y)}(x)$. 
By plugging $t$ back into~\eqref{eq:confidence:set}, we obtain the prediction set for every $x\in\mathcal X$. 
The above procedure is summarized the following algorithm.  
\begin{algorithm}
\caption{Conformalized Thresholded Intervals}
\begin{algorithmic}[1]
\STATE \textbf{Input:} labeled data $\{(x_i, y_i)\}_{i\in\mathcal I}$, unlabel test data $\{x_i\}_{\Itest}$, a data split ratio, black-box learning algorithm $\mathcal{B}$, level $\alpha \in (0, 1)$, number of interquantile intervals $K$
\STATE Randomly split the indices $\mathcal I$ into $\Itrain$ and $\Ical$. 
\STATE Train $\mathcal{B}$ on samples in $\Itrain$, and obtain quantile estimation functions $\hat q_k$ for $k = 0, 1, \dots, K$. 
\STATE For every $i\in\Ical\cup\Itest$, evaluate $\hat q_k(x_i)$ for $k=0, 1, \dots, K$.
\STATE For every $i\in\Ical\cup\Itest$, define the interquantile intervals $I_k(x_i) = (\hat q_{k-1}(x_i), \hat q_k(x_i)]$ for $k=1, \dots, K$. 
\STATE $t\gets (1 - \alpha)\text{-th quantile of the empirical distribution} \newline
\frac{1}{(1 + |I_\text{cal}|)} \sum_{i \in I_\text{cal}} \delta_{\mu(I_{k(y_i)}(x_i))} + \delta_{\infty}$.
\STATE For $i\in\Itest, \mathcal C(x_i) = \bigcup\{I_k(x_i): \mu(I_k(x_i))\le t, k=1, \dots, K\}.$
\STATE \textbf{Output:} $\mathcal C(x_i)$ for $i\in\Itest$. 
\end{algorithmic}\label{alg:cti}
\end{algorithm}

\begin{remark}
    Our approach can also be considered in terms of conformity scores. Using the definition of the prediction set in equation~\eqref{eq:confidence:set}, the value of label $y$ is contained within a small interval. More formally, let $k(y)$ be the index such that $y\in I_{k(y)}(x)$. We can then define the score function (\ref{eq:score:function}) for our proposed method as:
    \begin{align*}
    s(x,y) = \mu(I_{k(y)}(x)).
    \end{align*}
    This score function assigns a score to each label $y$ based on the size of the interval $I_{k(y)}(x)$ in which it falls. A smaller score indicates that the label $y$ is more likely to be the true label for the input $x$. In the context of conformal prediction, labels with smaller scores are given priority for inclusion in the prediction set. 
    We present a detailed discussion of the connection of Algorithm \ref{alg:cti} and Theorem \ref{thm:coverage} in the Appendix.
\end{remark}

\section{Theoretical Analysis}\label{sec:theory}

In the context of the entire population, CTI shares a very similar formulation with the Least Ambiguous Set  method used for classification, as described in~\cite{sadinle2019least}. If we assume that our quantile regression model is sufficiently accurate, CTI has the potential to achieve the optimal size for prediction sets when considering the marginal distribution. To understand this better, let's first take a look at the Neyman-Pearson Lemma:

\begin{lemma}[Neyman-Pearson] \label{lem:np}
Let $f$ and $g$ be two nonnegative measurable functions. Then the optimizer of the problem
\begin{align*}
\min_{C} \int_{C} g \quad \text{s.t. } \quad \int_{C} f \geq 1 - \alpha,
\end{align*}
is given by $C = \{x: f(x)/g(x) \geq t'\}$ if there exists $t$ such that $\int_{f/g \geq t'} f = 1 - \alpha$.
\end{lemma}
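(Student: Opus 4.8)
The plan is to run the classical exchange (``swapping'') argument behind all Neyman--Pearson--type results. Write $C^\star = \{x : f(x)/g(x) \ge t'\}$ for the candidate set, with the conventions $f(x)/g(x) = +\infty$ when $g(x) = 0 < f(x)$ and $f(x)/g(x) = 0$ when $f(x) = g(x) = 0$, so that $C^\star$ is measurable and $\int_{C^\star} f = 1-\alpha$ holds by hypothesis. One may assume $t' > 0$: if $t' = 0$ then $C^\star$ has full $f$-measure, the constraint forces $\int f = 1-\alpha$, and the claim is degenerate. Fix any competing measurable set $C$ with $\int_C f \ge 1-\alpha$; the goal is to show $\int_C g \ge \int_{C^\star} g$.

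The engine of the proof is the pointwise inequality
\[
\bigl(\mathbbm{1}_{C^\star}(x) - \mathbbm{1}_{C}(x)\bigr)\bigl(f(x) - t'\,g(x)\bigr) \ \ge\ 0 \qquad \text{for all } x ,
\]
which I would verify by a two-case check. If $x \in C^\star$, then $f(x) \ge t'\,g(x)$ (directly when $g(x) > 0$, and because $f(x) > 0 = t' g(x)$ on $\{g = 0,\, f > 0\}$), while $\mathbbm{1}_{C^\star}(x) - \mathbbm{1}_C(x) = 1 - \mathbbm{1}_C(x) \ge 0$, so the product is $\ge 0$. If $x \notin C^\star$, then $f(x) \le t'\,g(x)$ — off $C^\star$ one has either $g(x) > 0$ with $f(x) < t' g(x)$, or $f(x) = g(x) = 0$, since $\{g = 0,\, f > 0\} \subseteq C^\star$ — while $\mathbbm{1}_{C^\star}(x) - \mathbbm{1}_C(x) = -\mathbbm{1}_C(x) \le 0$, so the product is again $\ge 0$.

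Integrating this inequality over the whole space gives
\[
0 \ \le\ \int_{C^\star} f \;-\; \int_{C} f \;-\; t'\Bigl(\int_{C^\star} g \;-\; \int_{C} g\Bigr) ,
\]
and substituting $\int_{C^\star} f = 1-\alpha \le \int_C f$ yields $t'\bigl(\int_{C^\star} g - \int_C g\bigr) \le \int_{C^\star} f - \int_C f \le 0$; dividing by $t' > 0$ gives $\int_{C^\star} g \le \int_C g$, which is exactly optimality of $C^\star$.

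The hard part is the measure-theoretic bookkeeping around $\{g = 0\}$ rather than the algebra: one must fix the conventions so that the ratio and the set $C^\star$ are genuinely measurable, check that the pointwise inequality survives there (it does, since $f - t'g = f \ge 0$ on $\{g = 0\}$, which is exactly what forces $\{g = 0,\, f > 0\} \subseteq C^\star$), and justify the integration step when $\int f$ or $\int g$ may be infinite — this is handled by splitting the integrand into the pieces $(\mathbbm{1}_{C^\star} - \mathbbm{1}_{C})f$ and $t'(\mathbbm{1}_{C^\star} - \mathbbm{1}_C)g$ and using their positive and negative parts, or by first restricting to the region where both $f$ and $g$ are finite. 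The degenerate case $t' = 0$, and the fact that the minimizer need not be unique (one is free to modify $C^\star$ on $\{f = t'g\}$), should be flagged but not pursued.
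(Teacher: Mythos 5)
Your proof is correct. The paper itself states this lemma without proof, treating it as a classical fact, so there is no in-paper argument to compare against; what you have written is the canonical exchange argument, and it is sound. The pointwise inequality $(\mathbf{1}_{C^\star}-\mathbf{1}_{C})(f-t'g)\ge 0$ is verified correctly in both cases, the conventions on $\{g=0\}$ are exactly what is needed to make $C^\star$ measurable and to keep the inequality valid there, and the integration step together with $\int_{C^\star}f=1-\alpha\le\int_C f$ and $t'>0$ delivers $\int_{C^\star}g\le\int_C g$. Your caveats are also well placed: the $t'=0$ case genuinely is degenerate (there $C^\star$ is the whole space and optimality can fail if $g>0$ on $\{f=0\}$, so the lemma implicitly assumes $t'>0$), and the integrability bookkeeping you describe --- splitting $(\mathbf{1}_{C^\star}-\mathbf{1}_C)f$ and $t'(\mathbf{1}_{C^\star}-\mathbf{1}_C)g$ into positive and negative parts --- is the right way to justify the rearrangement when the integrals may be infinite. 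The only cosmetic remark is that the statement's ``if there exists $t$ such that $\int_{f/g\ge t'}f=1-\alpha$'' contains a typo ($t$ versus $t'$) which you have silently and correctly repaired.
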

To formalize the problem of minimizing the expected length of the prediction set subject to $1-\alpha$ coverage, we can write the problem as:
\begin{align*}
\min_{\mathcal C(x)} & \int_\mathcal X\int_{\mathcal C(x)} 1 d\mu(y) dP(x)
\\ 
& \text{s.t.} \quad \int_{\mathcal X} \int_{\mathcal C(x)} f(y|x) d\mu(y)dP(x)\ge 1-\alpha.
\end{align*}
The Neyman-Pearson Lemma implies that the optimal solution for $\mathcal C(x)$ has the form:
\begin{align}\label{eq:optimal:prediction:set}
\mathcal C(x) = \{y: f(y|x)\ge t'\}
\end{align}
for some suitable threshold $t'$. Indeed, this threshold can be defined as 
\begin{align}\label{eq:threshold:density}
    t' = \inf\{ t\in\mathbb R: \mathbb P(f(Y|X)\ge t)\ge 1-\alpha \}.
\end{align}
which is discussed in the Appendix in detail. 
Our algorithm is an empirical construction of such an interval.
Suppose the quantile regression approximates $\hat q_\tau$ well. In that case, we have:
\begin{align*}
\int_{y\in I_k(x)} f(y|x) d\mu(y)=\mathbb P(Y\in I_k(X))\approx 1/K.
\end{align*}
As $K$ approaches infinity, $\mu(I_k(x))$ tends to 0. If $f(y|x)$ is sufficiently smooth, then $f(y|x)\approx 1/(K\mu(I_k(x)))$.
The threshold on the length of intervals $\mu(I_k(x))\le t$ in equation~\eqref{eq:confidence:set} approximately implies $f(y|x)\ge 1/(Kt)$, which is optimal in the sense of the Neyman-Pearson Lemma. This means that our algorithm, which constructs prediction sets based on the threshold on interval lengths, is an empirical approximation of the optimal solution prescribed by the Neyman-Pearson Lemma.

The demonstration of the coverage probability for CTI follows the same reasoning as the traditional proof used in the general conformal prediction framework.

\begin{theorem}[Coverage Probability]\label{thm:coverage}
    Suppose the samples in $\{(X_i, Y_i)\}_{i\in\Ical\cup\Itest}\}$ are exchangeable, then for $(X,Y)$ in the test set, the coverage probability
    \begin{align*}
        \mathbb P\left(Y\in\mathcal C(X) \right)\ge 1-\alpha.
    \end{align*}
\end{theorem}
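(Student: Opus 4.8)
The plan is to run the standard split-conformal coverage argument with the conformity score $s(x,y)=\mu(I_{k(y)}(x))$ from the second remark. Relabel so that the test point is $(X_{n+1},Y_{n+1})=(X,Y)$ and write $m=|\mathcal I_{\text{cal}}|$; by hypothesis $\{(X_i,Y_i)\}_{i\in\mathcal I_{\text{cal}}\cup\{n+1\}}$ is exchangeable. For each such $i$ set $S_i=s(X_i,Y_i)=\mu(I_{k(Y_i)}(X_i))$, with the convention $S_i=+\infty$ if $Y_i\notin\bigcup_{k=1}^K I_k(X_i)$ (so that $k(Y_i)$, defined by $Y_i\in I_{k(Y_i)}(X_i)$, makes sense exactly when $S_i<\infty$). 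The first step is the elementary reduction: since the intervals $I_1(x),\dots,I_K(x)$ in \eqref{eq:interquantile:estimation} are pairwise disjoint, $Y_i$ lies in at most one of them, namely $I_{k(Y_i)}(X_i)$, and therefore by the definition \eqref{eq:confidence:set} of the prediction set,
\begin{align*}
Y_i\in\mathcal C(X_i)\iff \mu\bigl(I_{k(Y_i)}(X_i)\bigr)\le t\iff S_i\le t .
\end{align*}
Hence it suffices to prove $\mathbb P(S_{n+1}\le t)\ge 1-\alpha$.

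Next I would record that the threshold $t$ from \eqref{eq:threshold} depends on the data only through the calibration scores $\{S_i:i\in\mathcal I_{\text{cal}}\}$: the quantile estimates $\hat q_0,\dots,\hat q_K$ are fitted on $\mathcal D_{\text{train}}$, which is disjoint from $\mathcal D_{\text{cal}}\cup\mathcal D_{\text{test}}$, so conditioning on $\mathcal D_{\text{train}}$ each $S_i$ is one and the same measurable function of $(X_i,Y_i)$, and consequently $S_{n+1},(S_i)_{i\in\mathcal I_{\text{cal}}}$ are exchangeable. Write $k^\star=\lceil(1+m)(1-\alpha)\rceil$; if $k^\star\ge m+1$ the statement is vacuous (interpret $t=+\infty$), so assume $k^\star\le m$ and let $S_{(1)}\le\dots\le S_{(m)}$ denote the ordered calibration scores, so that $t=S_{(k^\star)}$.

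Finally comes the rank step. If $S_{n+1}>t=S_{(k^\star)}$ then $S_{n+1}$ strictly exceeds each of $S_{(1)},\dots,S_{(k^\star)}$, so at least $k^\star$ of the calibration scores fall strictly below $S_{n+1}$; contrapositively, if the number of calibration scores strictly below $S_{n+1}$ is at most $k^\star-1$, then $S_{n+1}\le t$. By exchangeability of the $m+1$ scores, that number is (sub-)uniformly distributed in the sense that the probability it is at most $k^\star-1$ is at least $k^\star/(m+1)$ (with equality in the absence of ties, by averaging over which of the $m+1$ equally likely positions the test point occupies). Therefore
\begin{align*}
\mathbb P\bigl(Y\in\mathcal C(X)\bigr)=\mathbb P(S_{n+1}\le t)\ge\frac{k^\star}{m+1}=\frac{\lceil(1+m)(1-\alpha)\rceil}{m+1}\ge 1-\alpha ,
\end{align*}
as claimed. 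The argument is routine; the points that need care, rather than being genuine obstacles, are the first reduction — in particular checking that the degenerate event $\{Y\notin\bigcup_k I_k(X)\}$ is correctly absorbed into $\{S_{n+1}>t\}$ via the $+\infty$ convention, and that equality in $\mu(I_k(x))\le t$ is handled consistently — together with the tie bookkeeping in the rank step, which is exactly what forces the conclusion to be an inequality.
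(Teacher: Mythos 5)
Your proof is correct and takes the same route as the paper: the paper's proof is a two-line appeal to exchangeability of the scores $s(X_i,Y_i)$ and the standard rank/quantile argument giving probability $\lceil(1+|\Ical|)(1-\alpha)\rceil/(1+|\Ical|)\ge 1-\alpha$. You have simply filled in the details the paper leaves implicit — the equivalence $Y\in\mathcal C(X)\iff s(X,Y)\le t$ via disjointness of the interquantile intervals, the conditioning on $\Dtrain$, and the tie/degenerate-case bookkeeping.
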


The upcoming proposition will demonstrate that the threshold $t$, which is defined in equation~\eqref{eq:threshold} for the length of interquantile intervals, results in a suitable threshold for the distribution of the conditional density $f(Y|X)$, as shown in equation~\eqref{eq:threshold:density}.

\begin{prop}[Threshold Consistency]\label{prop:consistency}
    Suppose the interquantile intervals satisfy
    \begin{align*}%
    \sup_{t'}\left| F_{f(Y|X)}(t') - \frac 1{\Ical}\sum_{i\in \Ical}\mathbf 1\big\{\mu(I_{k(y_i)}(x_i))> \frac{1}{Kt'}\big\} \right|\le \epsilon
\end{align*}
Let $t$ be as defined in~\eqref{eq:threshold} and let $1-\alpha'=\lceil (1+|\mathcal I_{\text{cal}}|)(1-\alpha)\rceil/|\Ical|$, then $F_{f(Y|X)}(1/(Kt))\ge \alpha'-\epsilon$. 
\end{prop}

This proposition establishes that if the value of $\epsilon$ is sufficiently small and the size of the calibration set is large enough, then the value $t'=1/(Kt)$, where $t$ is the threshold defined in equation~\eqref{eq:threshold}, will be close to the $\alpha$-th quantile of the distribution $f(Y|X)$.

Lastly, the theorem presented below demonstrates that the size of the prediction set obtained using CTI will not significantly exceed the size of the theoretically optimal prediction set, which is defined in equation~\eqref{eq:optimal:prediction:set}. 

\begin{theorem}[Prediction Set] \label{thm:set}
Suppose for $x\in \mathcal X$,
    \begin{align*}%
    P(Y\in I_k(x)|X=x) = \int_{y\in I_k(x)} f(y|x) dy \ge \frac {1-\delta(x)}K,
\end{align*}
and suppose $f(y|x)$ has a Lipschitz constant $L(x)$, then
\begin{align*}
    \mathcal C_{1-\alpha}(x)\subseteq \left\{y: f(y|x)\ge \frac{1-\delta_k(x)}{Kt} - \frac {L(x)t}2\right\}.
\end{align*}
\end{theorem}

The previous proposition demonstrates that the value $1/(Kt)$, where $t$ is the threshold defined in equation~\eqref{eq:threshold}, is slightly smaller than the $\alpha$-th quantile of  $f(Y|X)$, which represents the theoretically optimal threshold as shown in equation~\eqref{eq:threshold:density}. This theorem further illustrates that the prediction set obtained using CTI will include values that are even more conservative. However, we understand that $1/(Kt)$, where $t$ is the threshold defined in equation~\eqref{eq:threshold}, serves as a relatively stable estimate of the quantile, which is asymptotically equivalent to a constant value. As the number of interquantile intervals $K$ approaches infinity, the threshold $t$ converges to 0. If we additionally assume that our quantile regression model is accurate, meaning that the error term $\delta_k(x)$ is small, then the prediction set $\mathcal C_{1-\alpha}(x)$ obtained using CTI will be close to the theoretically optimal prediction set defined in equation~\eqref{eq:confidence:set}.

\begin{remark}[Comparison with existing methods]
Both CTI and CHR use multi-output quantile regression but differ in their approach to achieving coverage. CHR constructs prediction sets with an expected $1-\alpha$ coverage for each input $x$, focusing on conditional coverage. In contrast, CTI adopts a global perspective by thresholding interquantile intervals across all $x$ values. This global approach improves efficiency in achieving marginal coverage by deprioritizing $x$ values where interquantile ranges are large and uncertain. By focusing on the overall distribution, CTI allocates coverage more effectively, resulting in efficient prediction sets.

Multi-output quantile regression often requires training separate models for each quantile, which can be computationally expensive. CTI addresses this by using Quantile Regression Forests (RF) \cite{meinshausen2006quantile} and Quantile Regression Neural Networks (NN) to estimate multiple quantiles simultaneously. RF aggregates quantile estimates from an ensemble of decision trees, avoiding separate optimization for each quantile. Similarly, NN shares parameters across a single neural network to output multiple quantiles as a vector. This efficiency allows CTI to generate prediction sets tailored to user-specified tolerance levels without redundant model training.

To balance expressiveness and computational efficiency, we fix the number of quantiles at $K=100$ for all datasets. This choice ensures affordable runtime for CTI while maintaining flexibility for various tasks. We present a comprehensive discussion of the choice of $K$ in the Appendix.

As shown in Lemma \ref{lem:np} and Theorem \ref{thm:set}, the optimality of CTI is guaranteed by the Neyman-Pearson Lemma. Specifically, as the error term $\delta_k(x) \to 0$, $K \to \infty$, and $t \to 0$, the prediction set $S$ converges to
\[
S \to \{x \,|\, p(x) \geq \frac{1}{Kt}\},
\]
ensuring both validity and efficiency in prediction set construction.
\end{remark}

\section{Experiment}\label{sec:exp}

\subsection{Simulation Study}
We presented an experiment on simulated data to illustrate the importance of adaptivity in conformal prediction. We also derive the analytical quantile function at different levels and the expected prediction set sizes of various conformal methods and demonstrate the superiority of CTI.

\begin{figure}[t!]
\centering
\begin{subfigure}{0.23\textwidth}
\centering
\includegraphics[width=\textwidth]{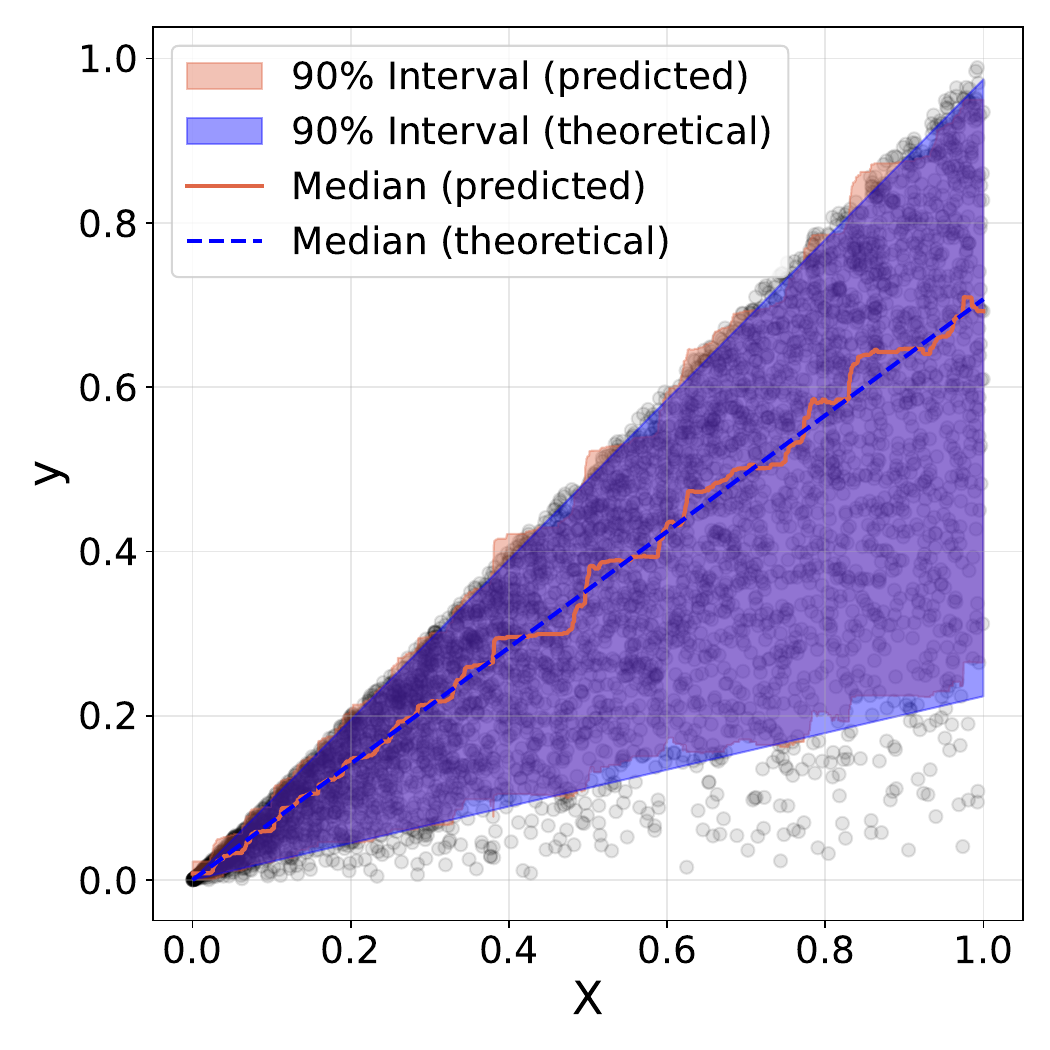}
\caption{Random Forest}
\label{fig:qrf}
\end{subfigure}
\hfill
\begin{subfigure}{0.23\textwidth}
\centering
\includegraphics[width=\textwidth]{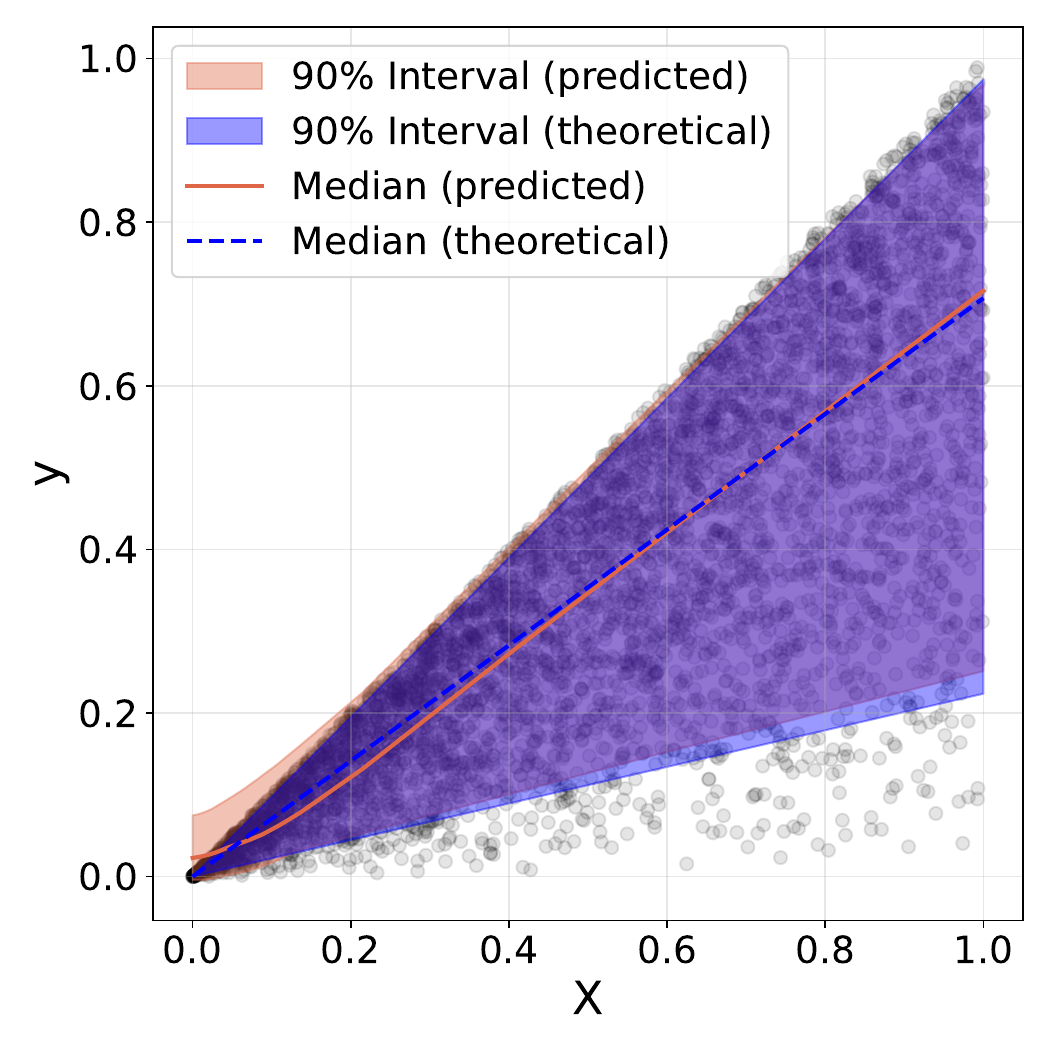}
\caption{Neural Network}
\label{fig:qrnn}
\end{subfigure}
\caption{Quantile regression results for the synthetic data. (a) Quantile regression forest and (b) quantile regression neural network show the estimated and theoretical 5\%, 50\%, and 95\% quantiles as well as the 90\% intervals.}
\label{fig:synthetic_quantiles}
\end{figure}

To generate the training data, we draw $n = 10000$ independent, univariate predictor samples $X_i$ from the uniform distribution on the interval $[0, 1]$. The response variable is then sampled i.i.d. according to:
\begin{equation*}
y \sim \text{Triangular}(0, x, x),
\end{equation*}
where $\text{Triangular}(0, x, x)$ is the Triangular distribution with lower limit $0$, upper limit $x$, and mode $x$. The conditional density is:
\begin{align*}
    f(y|x) = \frac{2y}{x^2} \mathds{1}\{y\in (0,x)\}. 
\end{align*}

Figure \ref{fig:synthetic_quantiles} presents the estimated quantiles alongside the theoretical quantile functions. 
In what follows, we derive the analytical forms of the prediction sets for CQR, CHR, and CTI based on the theoretical quantile functions.

\begin{figure}[t!]
\centering
\begin{subfigure}[b]{0.5\textwidth}
\centering
\includegraphics[width=\textwidth]{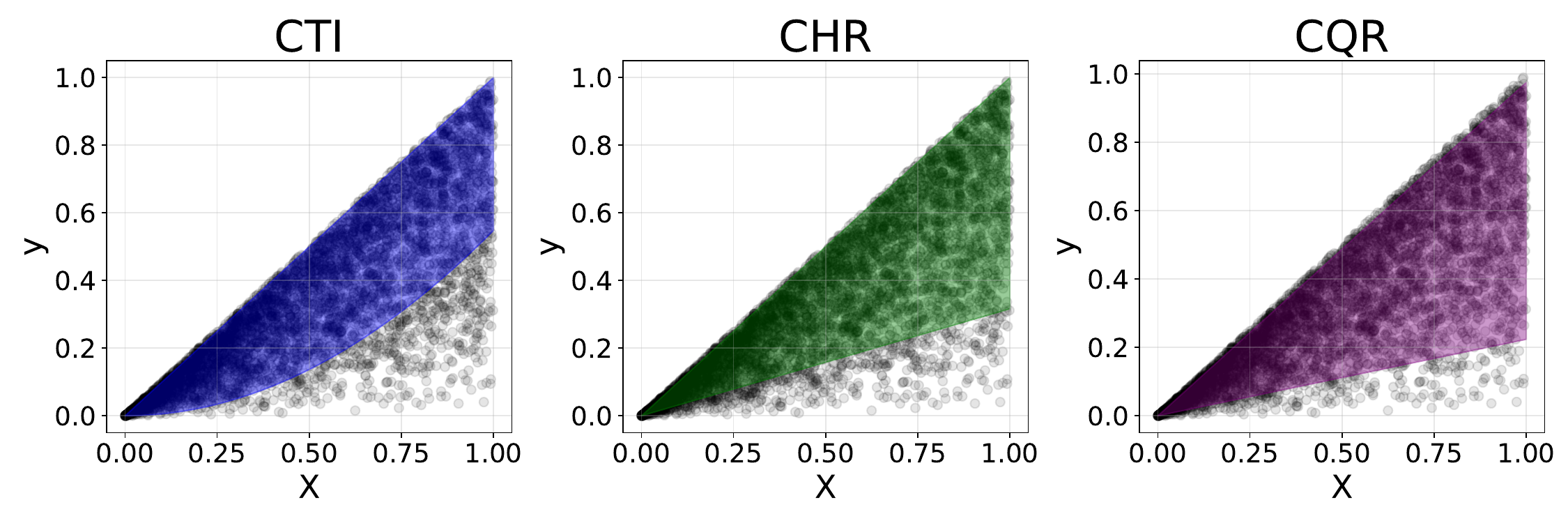}
\caption{Theoretical prediction set at $\alpha=0.1$}
\label{fig:theoretical_sets}
\end{subfigure}
\vfill
\begin{subfigure}[b]{0.45\textwidth}
\centering
\includegraphics[width=\textwidth]{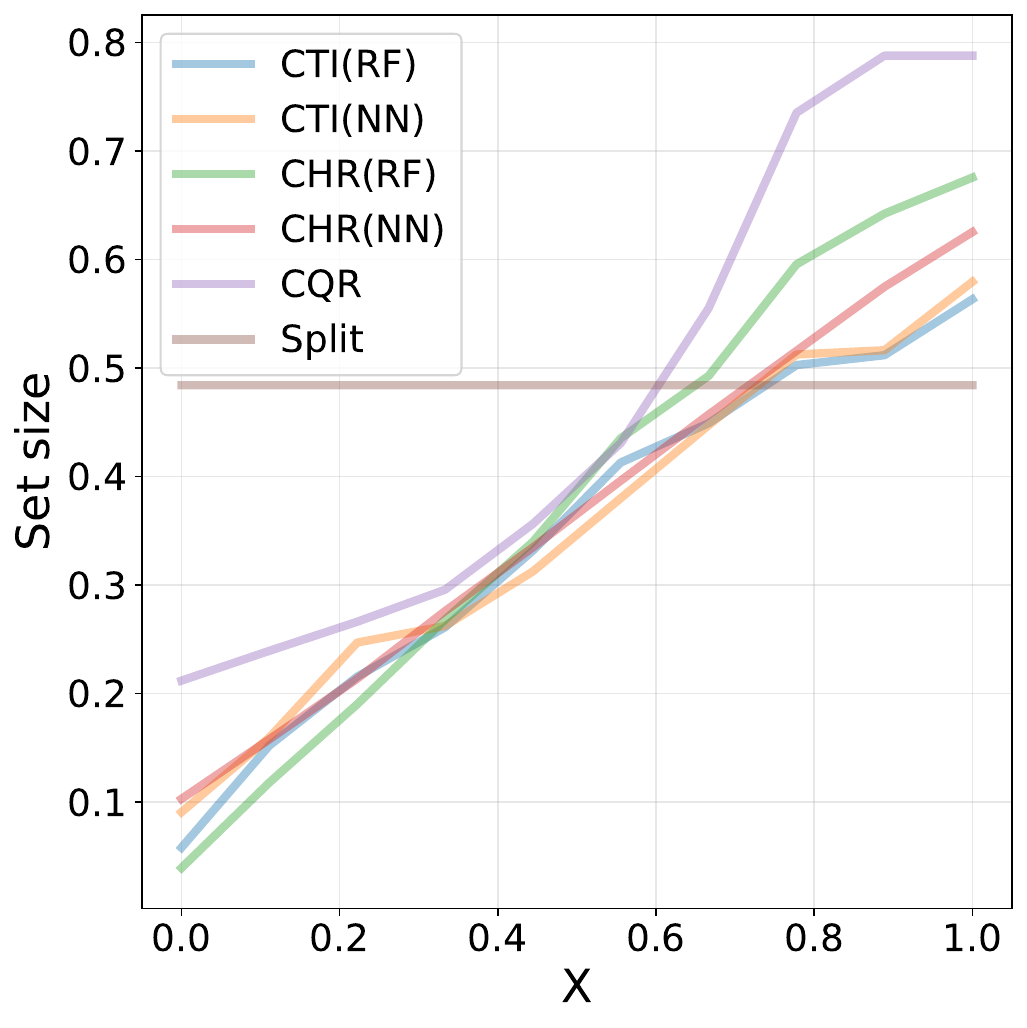}
\caption{Prediction set sizes at $\alpha=0.1$}
\label{fig:size_comparison}
\end{subfigure}
\caption{Prediction set sizes for the synthetic data at $\alpha=0.1$. (a) Theoretical prediction set for different conformal methods. The expected set sizes for CQR, CHR, and CTI are 0.376, 0.342, and 0.317, respectively. (b) Prediction set sizes as a function of $x$ using the estimated quantile functions (RF for random forests and NN for neural network). CTI achieves the smallest set size while maintaining guaranteed coverage.}
\label{fig:synthetic_results}
\end{figure}

Figure \ref{fig:synthetic_results} illustrates the theoretical and estimated prediction set sizes as a function of $x$ at $\alpha=0.1$. The theoretical expected set sizes for CQR, CHR, and CTI are 0.376, 0.342, and 0.317, respectively. The results demonstrate that CTI achieves the smallest set size while maintaining guaranteed coverage.

\begin{table*}[t!]
\centering
\begin{adjustbox}{max width=0.9\textwidth}
\begin{tabular}{lccccccc}
\toprule
Dataset & Metric & CTI(RF) & CTI(NN) & CHR(RF) & CHR(NN) & CQR & Split \\
\midrule
\multirow{2}{*}{synthetic} & Coverage & 0.900 (0.006) & 0.900 (0.007) & 0.903 (0.007) & 0.901 (0.006) & 0.903 (0.008) & 0.903 (0.007) \\
& Size & \textbf{0.345 (0.005)} & 0.369 (0.015) & 0.375 (0.006) & 0.370 (0.014) & 0.440 (0.022) & 0.482 (0.016) \\
\midrule
\multirow{2}{*}{bike} & Coverage & 0.898 (0.007) & 0.899 (0.007) & 0.898 (0.010) & 0.900 (0.006) & 0.906 (0.009) & 0.899 (0.008) \\
& Size & 1.032 (0.029) & \textbf{0.720 (0.028)} & 1.124 (0.028) & 0.758 (0.047) & 1.599 (0.054) & 1.345 (0.053) \\
\midrule
\multirow{2}{*}{bio} & Coverage & 0.900 (0.004) & 0.902 (0.004) & 0.899 (0.005) & 0.900 (0.004) & 0.900 (0.003) & 0.901 (0.004) \\
& Size & \textbf{1.295 (0.018)} & 1.474 (0.030) & 1.450 (0.023) & 1.576 (0.012) & 2.005 (0.016) & 1.961 (0.039) \\
\midrule
\multirow{2}{*}{blog} & Coverage & 0.910 (0.002) & 0.900 (0.004) & 0.902 (0.004) & 0.902 (0.003) & 0.940 (0.009) & 0.910 (0.006) \\
& Size & \textbf{0.709 (0.031)} & 1.003 (0.024) & 1.567 (0.074) & 1.737 (0.154) & 3.259 (0.327) & 1.453 (0.113) \\
\midrule
\multirow{2}{*}{community} & Coverage & 0.909 (0.018) & 0.908 (0.021) & 0.903 (0.015) & 0.905 (0.021) & 0.889 (0.024) & 0.902 (0.024) \\
& Size & 1.611 (0.088) & \textbf{1.275 (0.095)} & 1.637 (0.096) & 1.588 (0.100) & 1.680 (0.078) & 2.132 (0.188) \\
\midrule
\multirow{2}{*}{concrete} & Coverage & 0.908 (0.024) & 0.900 (0.031) & 0.899 (0.022) & 0.900 (0.023) & 0.901 (0.024) & 0.896 (0.021) \\
& Size & 0.967 (0.035) & \textbf{0.473 (0.050)} & 0.933 (0.041) & 0.505 (0.144) & 0.692 (0.051) & 0.619 (0.029) \\
\midrule
\multirow{2}{*}{facebook1} & Coverage & 0.909 (0.003) & 0.899 (0.003) & 0.901 (0.004) & 0.900 (0.004) & 0.945 (0.009) & 0.903 (0.002) \\
& Size & \textbf{0.766 (0.033)} & 0.780 (0.023) & 1.595 (0.088) & 1.379 (0.086) & 2.627 (0.329) & 2.252 (0.208) \\
\midrule
\multirow{2}{*}{facebook2} & Coverage & 0.911 (0.002) & 0.900 (0.001) & 0.899 (0.002) & 0.899 (0.002) & 0.943 (0.006) & 0.904 (0.002) \\
& Size & \textbf{0.735 (0.017)} & 0.773 (0.023) & 1.533 (0.053) & 1.382 (0.057) & 2.661 (0.272) & 2.100 (0.108) \\
\midrule
\multirow{2}{*}{homes} & Coverage & 0.900 (0.005) & 0.900 (0.006) & 0.899 (0.005) & 0.895 (0.007) & 0.898 (0.006) & 0.897 (0.005) \\
& Size & 0.640 (0.011) & \textbf{0.515 (0.008)} & 0.682 (0.012) & 0.535 (0.010) & 0.851 (0.052) & 0.825 (0.072) \\
\midrule
\multirow{2}{*}{meps19} & Coverage & 0.907 (0.008) & 0.902 (0.007) & 0.901 (0.007) & 0.902 (0.004) & 0.932 (0.007) & 0.902 (0.010) \\
& Size & \textbf{1.760 (0.087)} & 1.795 (0.061) & 2.388 (0.195) & 2.602 (0.128) & 2.923 (0.170) & 3.092 (0.377) \\
\midrule
\multirow{2}{*}{meps20} & Coverage & 0.904 (0.004) & 0.901 (0.007) & 0.901 (0.007) & 0.901 (0.006) & 0.927 (0.009) & 0.902 (0.005) \\
& Size & \textbf{1.883 (0.067)} & 1.921 (0.091) & 2.376 (0.105) & 2.594 (0.140) & 2.925 (0.193) & 3.154 (0.217) \\
\midrule
\multirow{2}{*}{meps21} & Coverage & 0.906 (0.005) & 0.900 (0.008) & 0.900 (0.006) & 0.898 (0.004) & 0.928 (0.007) & 0.905 (0.004) \\ 
& Size & \textbf{1.832 (0.089)} & 1.866 (0.076) & 2.510 (0.167) & 2.609 (0.145) & 2.971 (0.179) & 3.046 (0.199) \\
\midrule
\multirow{2}{*}{star} & Coverage & 0.903 (0.018) & 0.910 (0.017) & 0.907 (0.021) & 0.897 (0.018) & 0.901 (0.016) & 0.910 (0.024) \\
& Size & 0.186 (0.006) & 0.197 (0.009) & 0.182 (0.005) & 0.204 (0.009) & \textbf{0.181 (0.005)} & 0.181 (0.008) \\
\bottomrule
\end{tabular}
\end{adjustbox}
\caption{The coverage and size results for various methods are presented in the table. Our proposed Conformalized Thresholded Interval (CTI) method, which utilizes quantile regression based on either random forest (RF) or neural network (NN) models, demonstrates superior performance compared to other methods on most datasets.}
\label{tab:results}
\end{table*}

\subsection{Real Data}
Following the methodology outlined in \cite{sesia2020comparison}, we rescale the response $Y$ by the mean absolute value. We randomly allocate $20\%$ of the samples for testing, and from the remaining data, we utilize $70\%$ for training the quantile regression model and $30\%$ for calibration. This split has been validated in \cite{sesia2020comparison}. Wwe repeat all experiments 10 times, starting from the initial data splitting and the training procedure of quantile regression using both random forest (RF) and neural network (NN) models.

Both the CTI and CHR incorporate the quantile regression results from both RF and NN. For CQR, we use the NN results, as it is the choice in the original paper~\cite{romano2019conformalized}. We assess the performance of the generated prediction intervals in terms of coverage and efficiency. Since CTI and CHR utilize the same quantile regression results, the comparison between these two methods is fair with respect to the quality of the quantile regression. 

Table \ref{tab:results} presents the coverage probabilities and prediction set sizes for the various methods. CTI, which incorporates quantile regression based on either RF or NN models, outperforms other methods on most datasets. 

In a small dataset ``star", which has a relatively small sample size ($n=2161, d=39$), CQR outperforms CTI and CHR. The limited number of samples in this dataset may hinder the performance of the multi-output quantile regression model, as it requires sufficient data to accurately capture the underlying relationships between the features and the response variable. We also notice a similar trend in the relative performance comparison of CTI based on random forest and CTI based on neural network, as well as CHR based on random forest and CHR based on neural network. This suggests that the efficiency of the conformal prediction sets depends on the quality of the multi-output quantile regression. The choice of the underlying model plays a crucial role in the performance of the conformal prediction methods.

To evaluate the performance of CTI, we compare the lengths of response intervals (intervals containing the actual responses) with the lengths of all intervals generated by the multi-output quantile regression model across all datasets. The distributions of interval lengths for response intervals (blue histogram) and all intervals (red histogram) on the test set are shown in Figure \ref{fig:interval_length_comparison}. Although, theoretically, the difference in means between the two distributions should be zero, we observe that it is not. This discrepancy highlights the dependence of the prediction sets produced by CTI on the specific dataset and the underlying quantile regression model.

\begin{figure}[t!]
    \centering
    \begin{tabular}{@{}ccc@{}}
        \includegraphics[width=0.3\columnwidth]{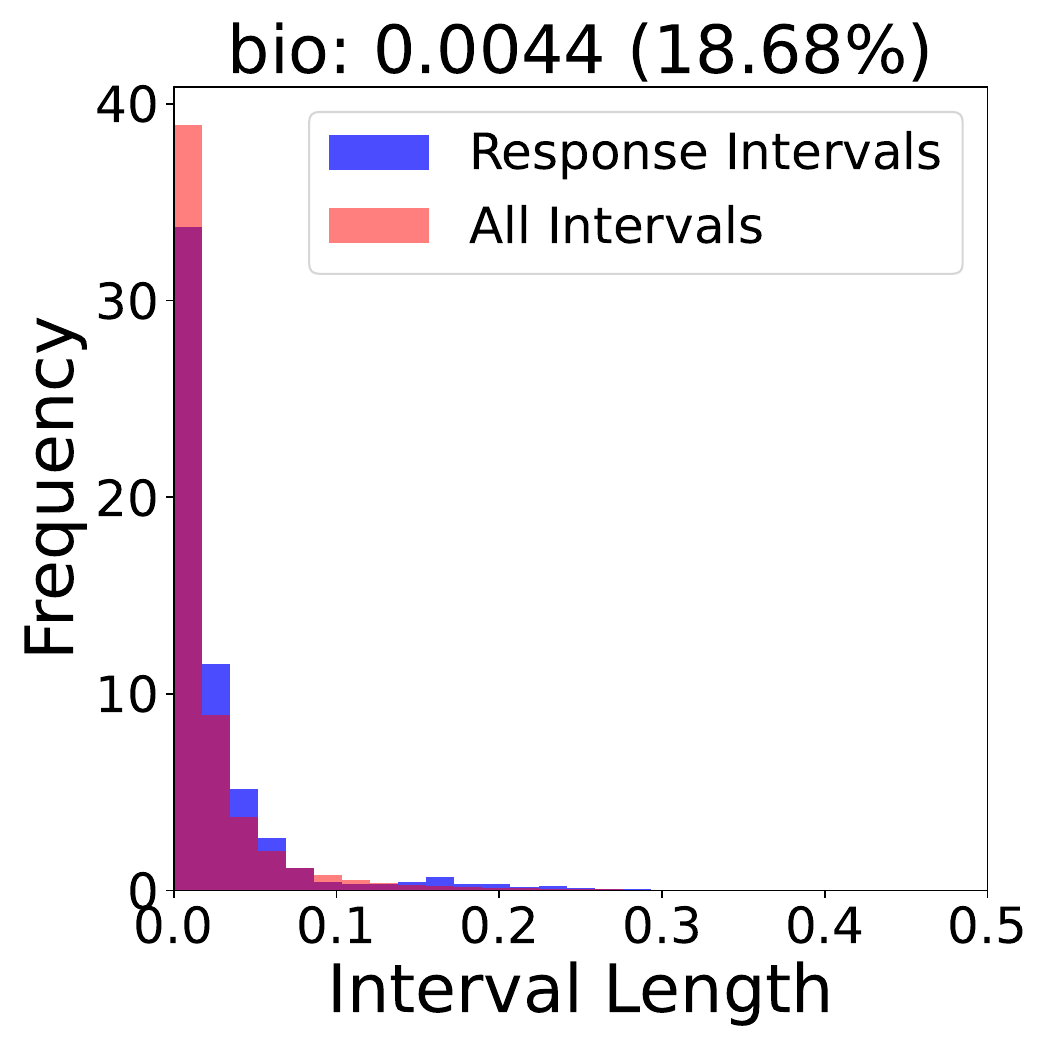} &
        \includegraphics[width=0.3\columnwidth]{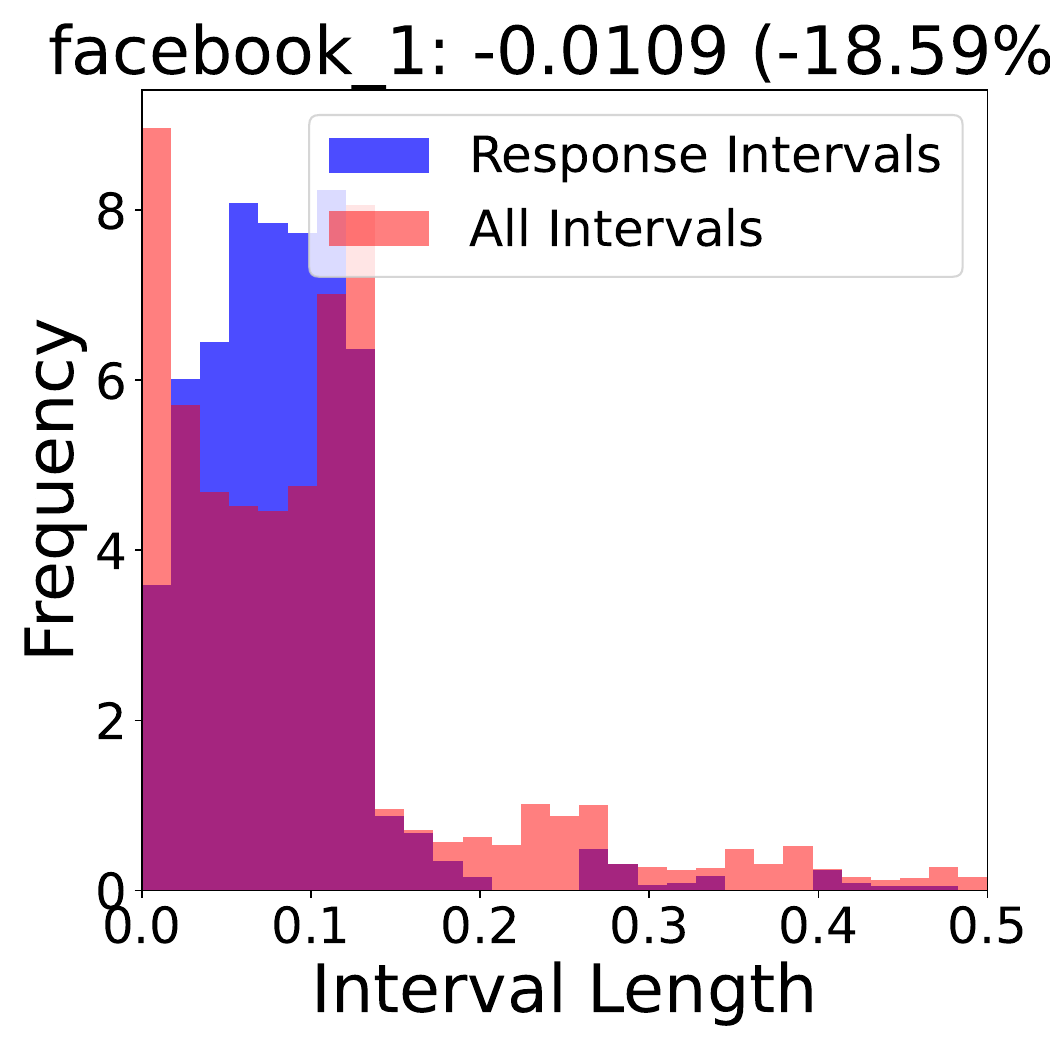} &
        \includegraphics[width=0.3\columnwidth]{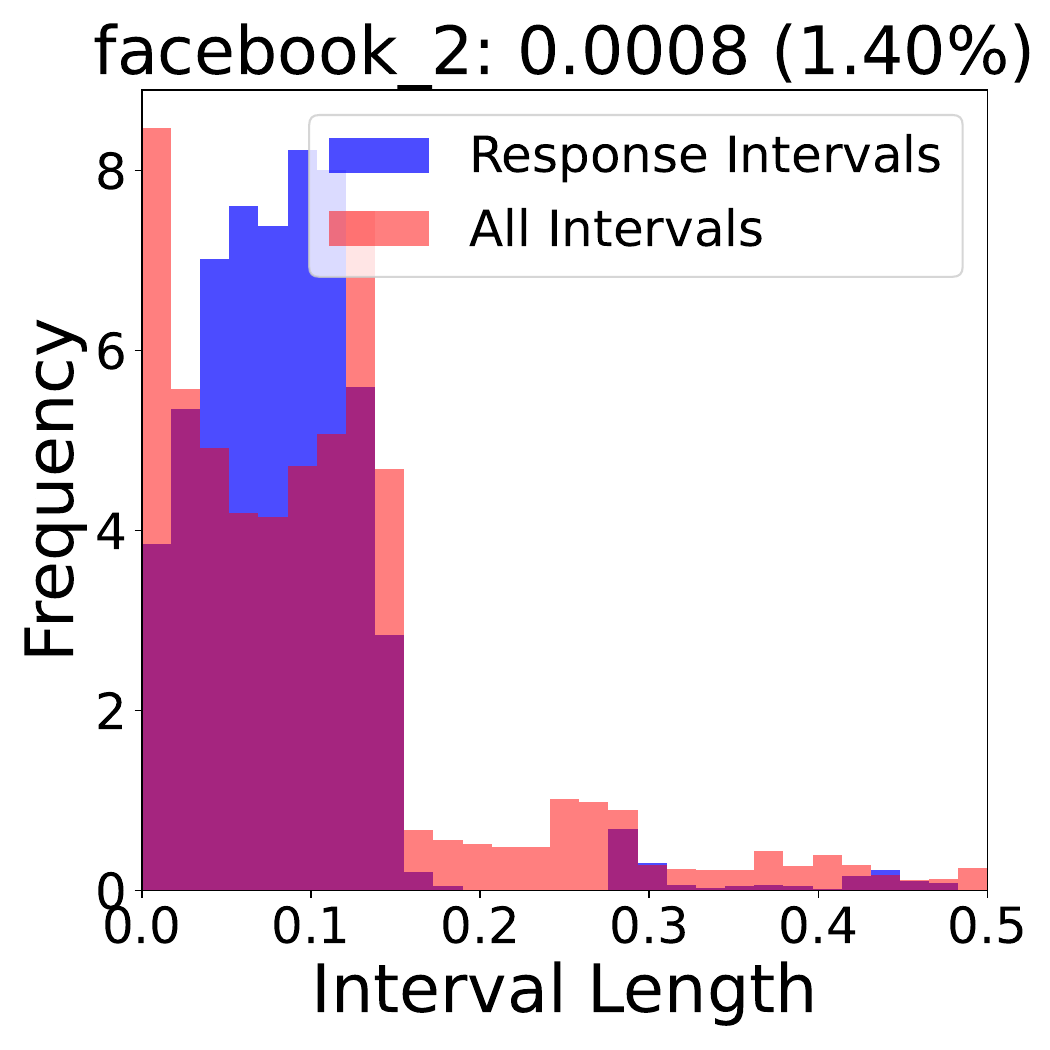} \\
        \includegraphics[width=0.3\columnwidth]{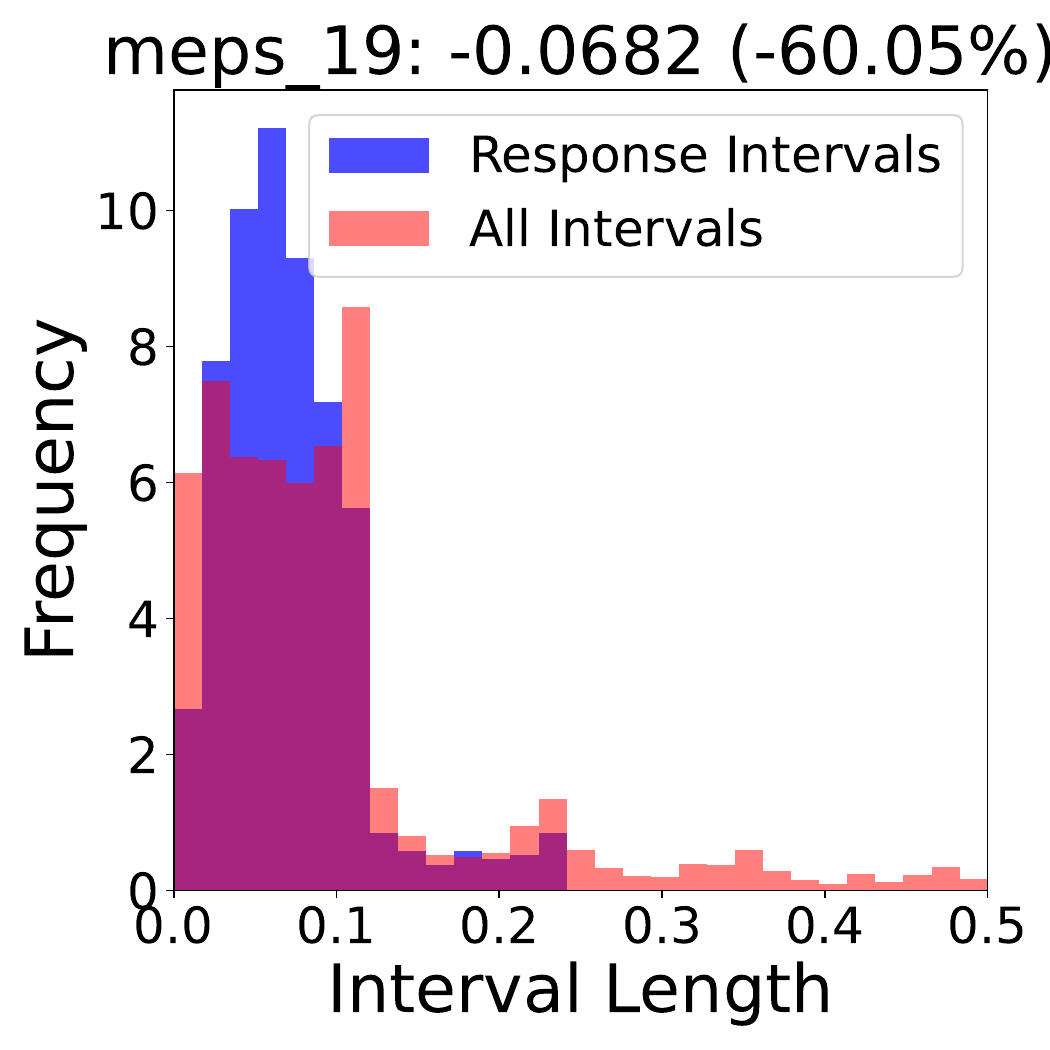} &
        \includegraphics[width=0.3\columnwidth]{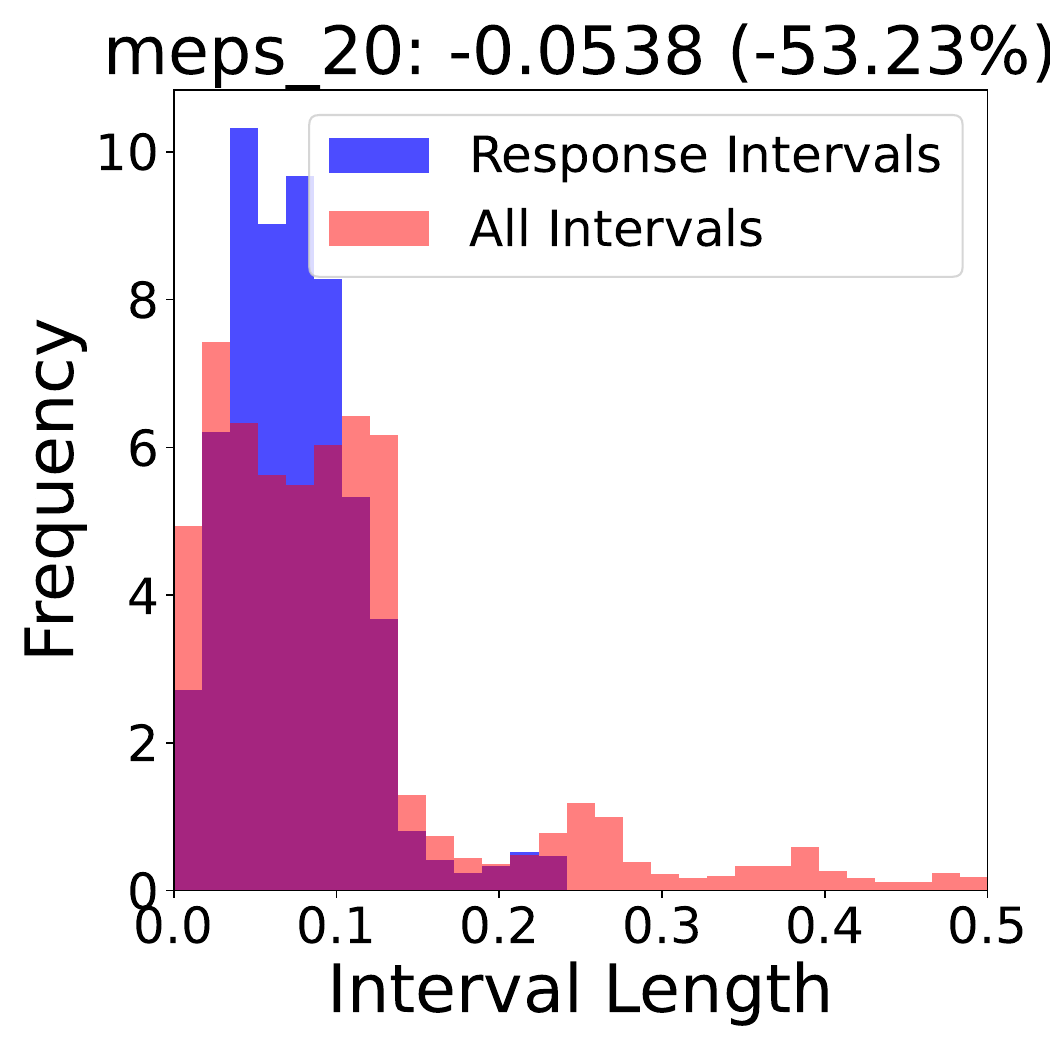} &
        \includegraphics[width=0.3\columnwidth]{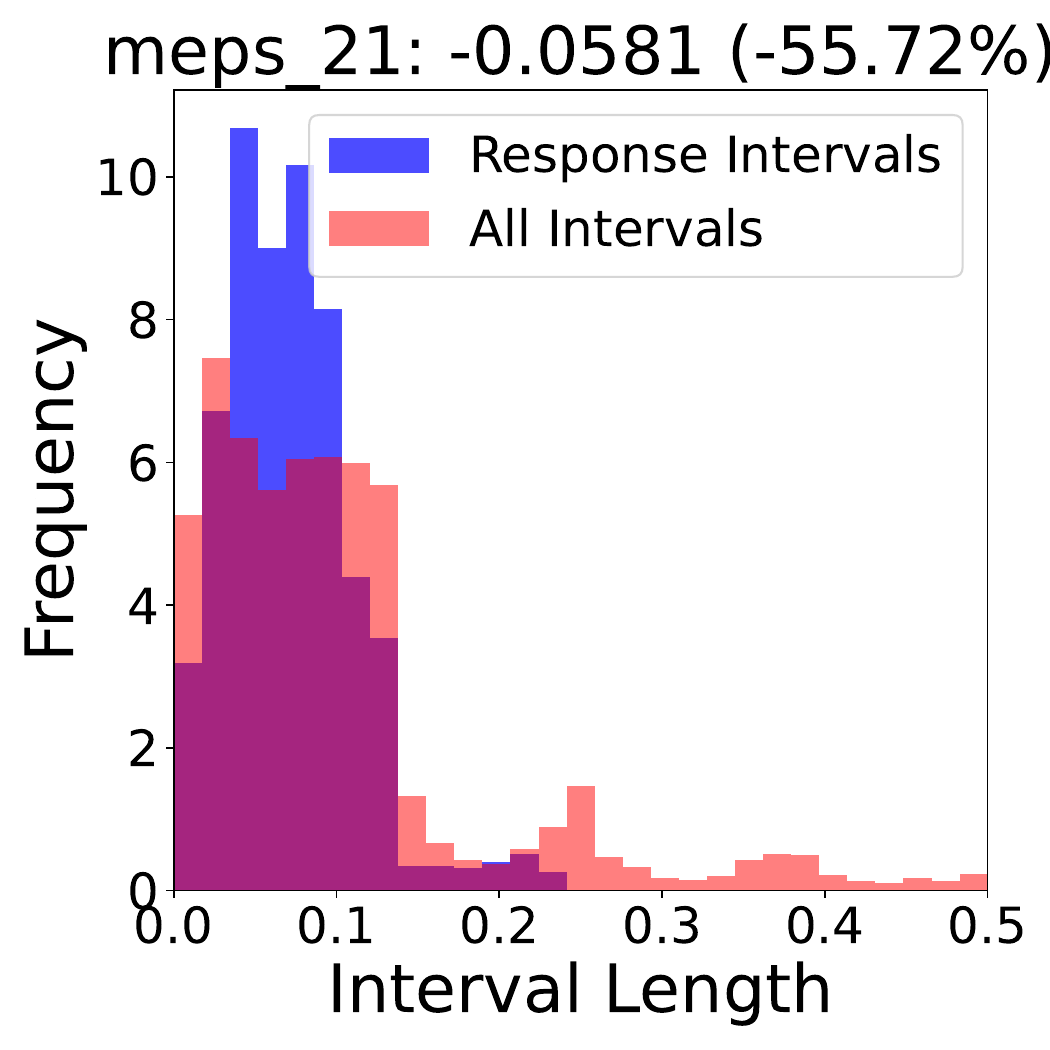} \\
        \includegraphics[width=0.3\columnwidth]{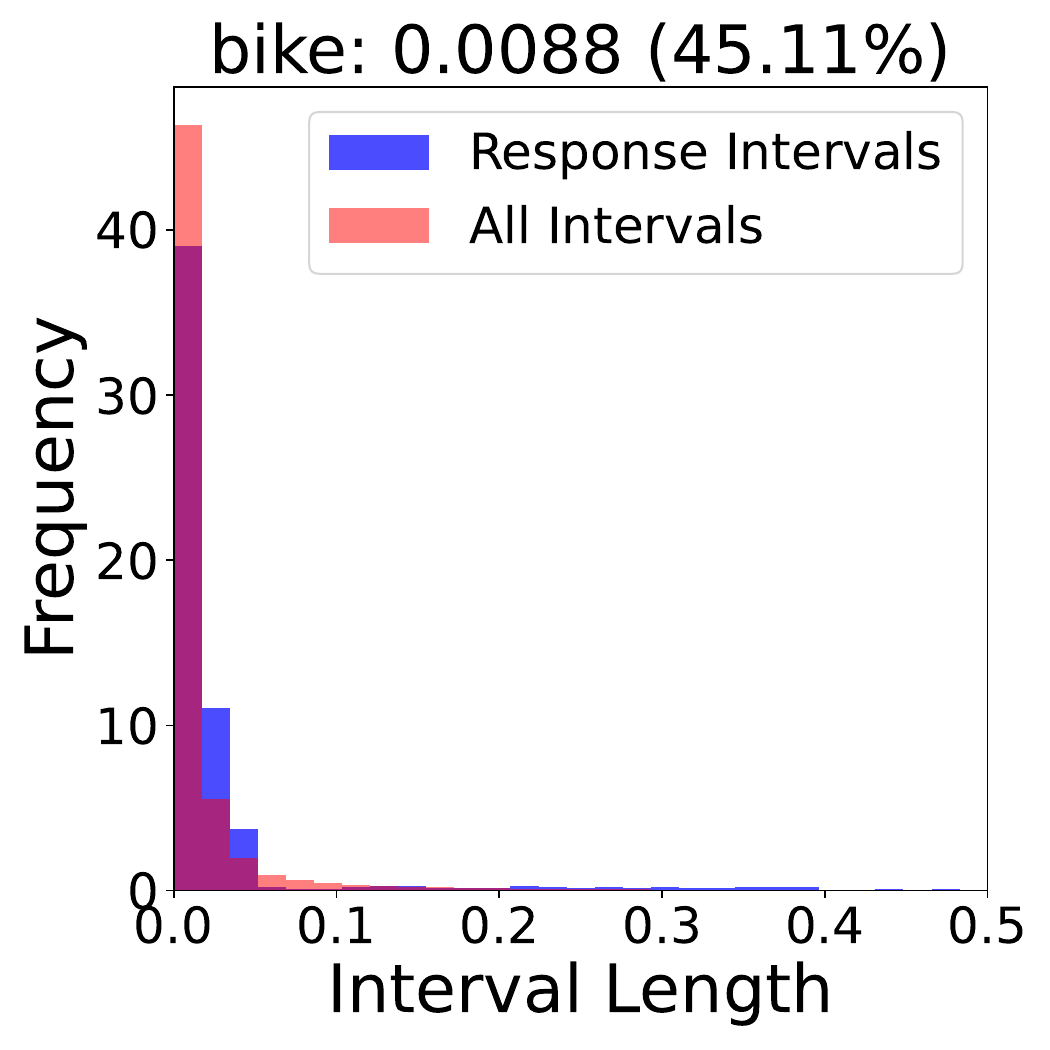} &
        \includegraphics[width=0.3\columnwidth]{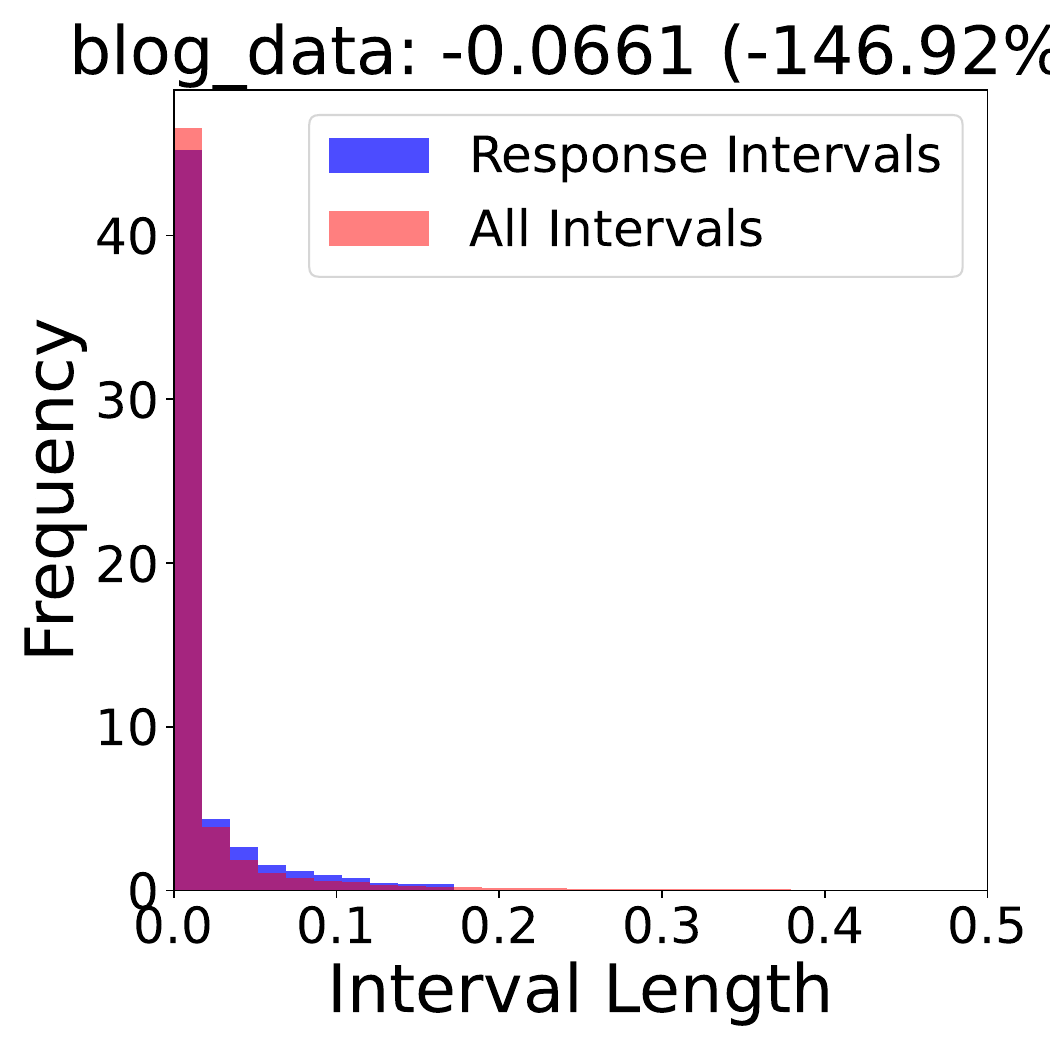} &
        \includegraphics[width=0.3\columnwidth]{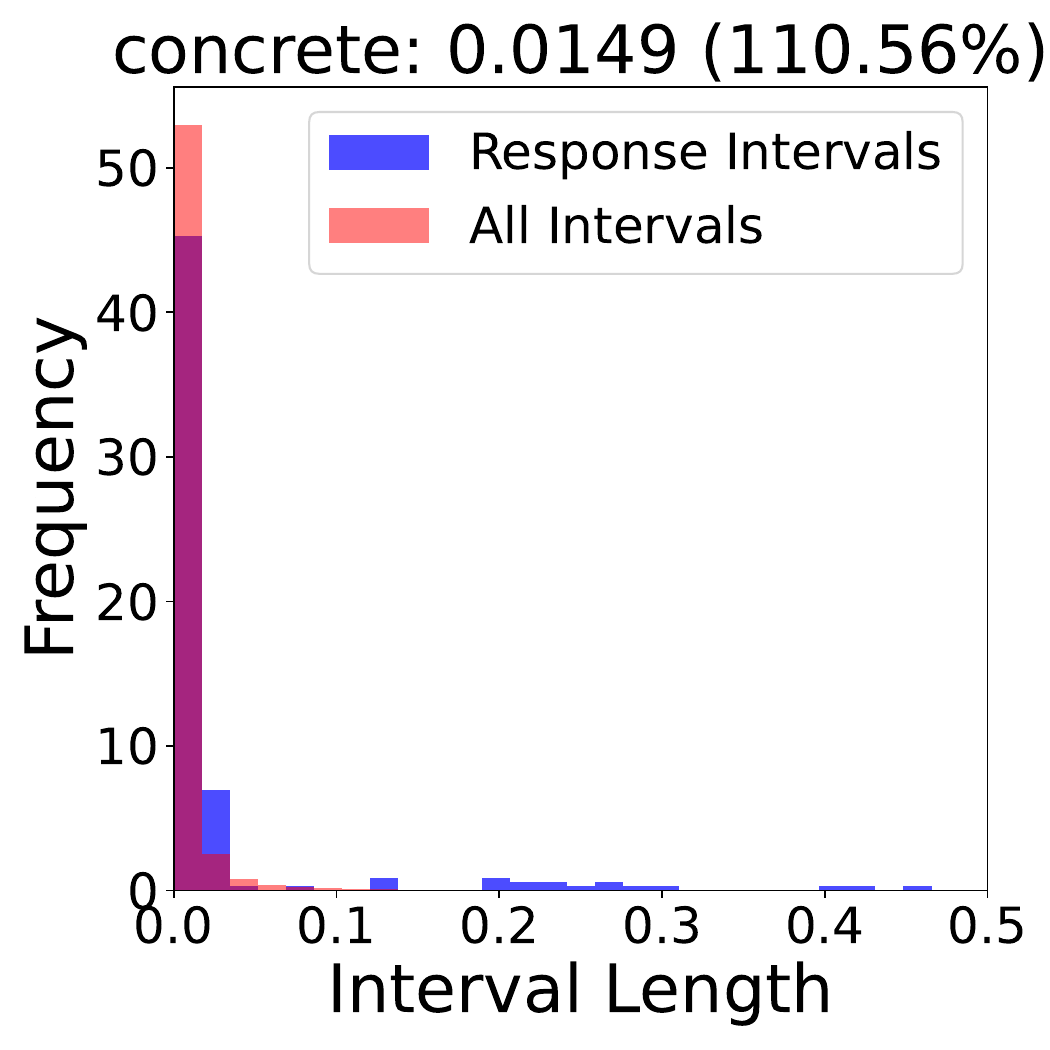} \\
        \includegraphics[width=0.3\columnwidth]{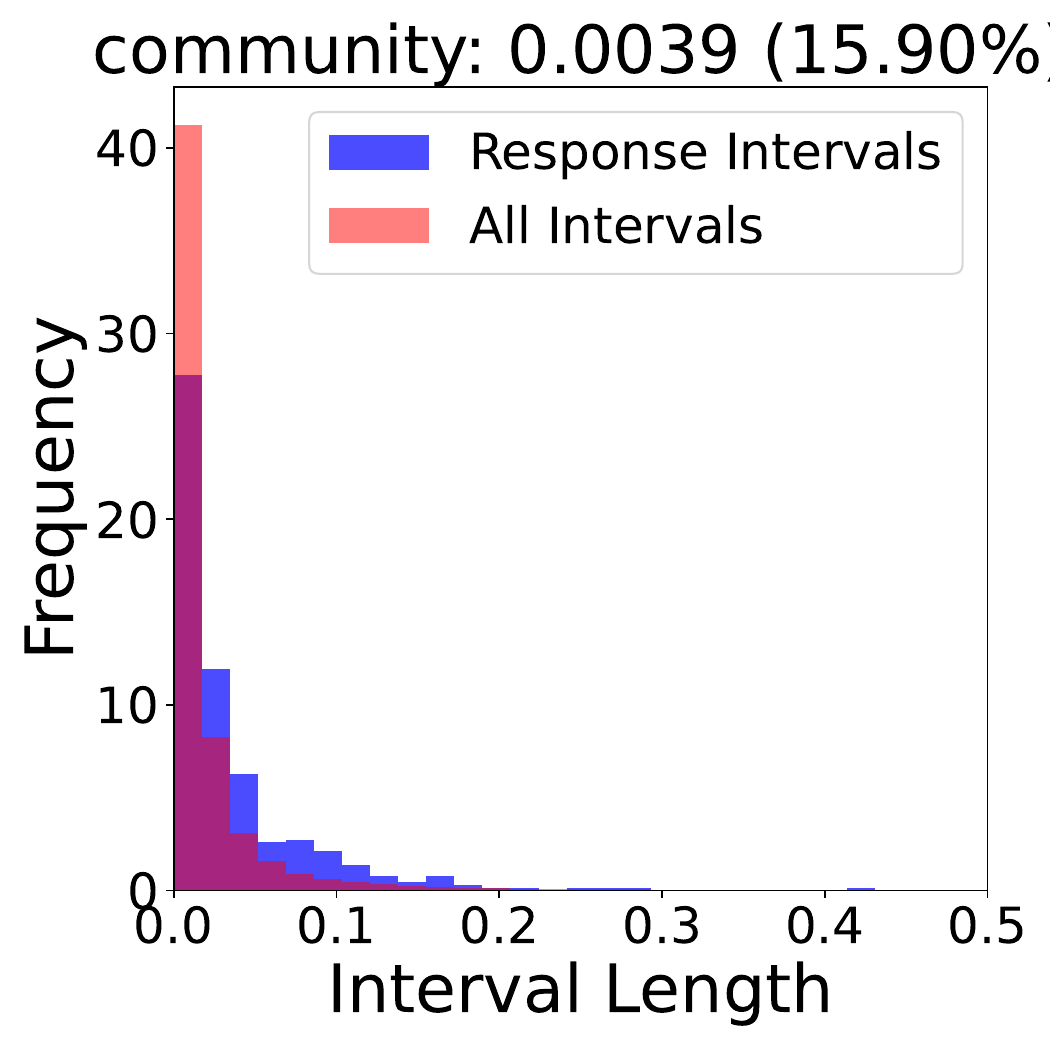} &
        \includegraphics[width=0.3\columnwidth]{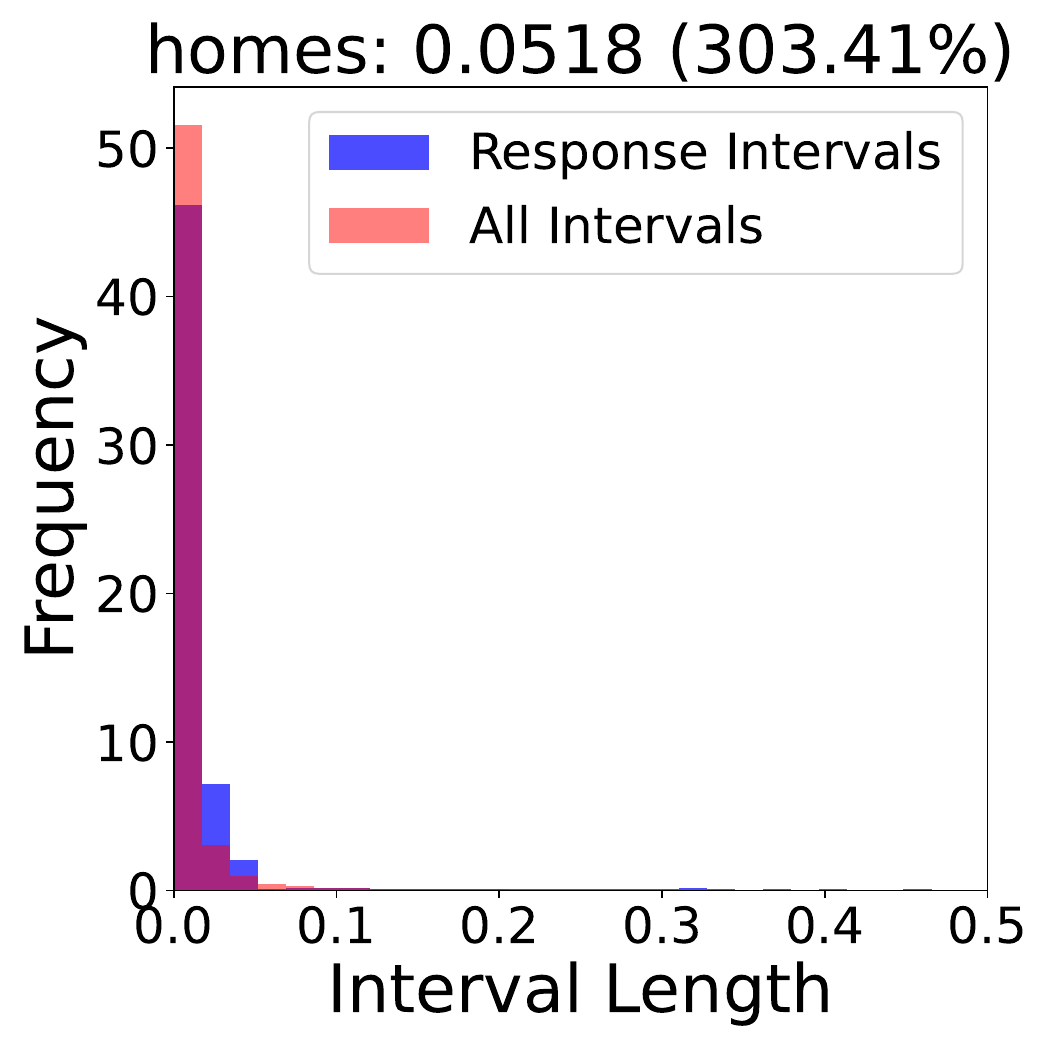} &
        \includegraphics[width=0.3\columnwidth]{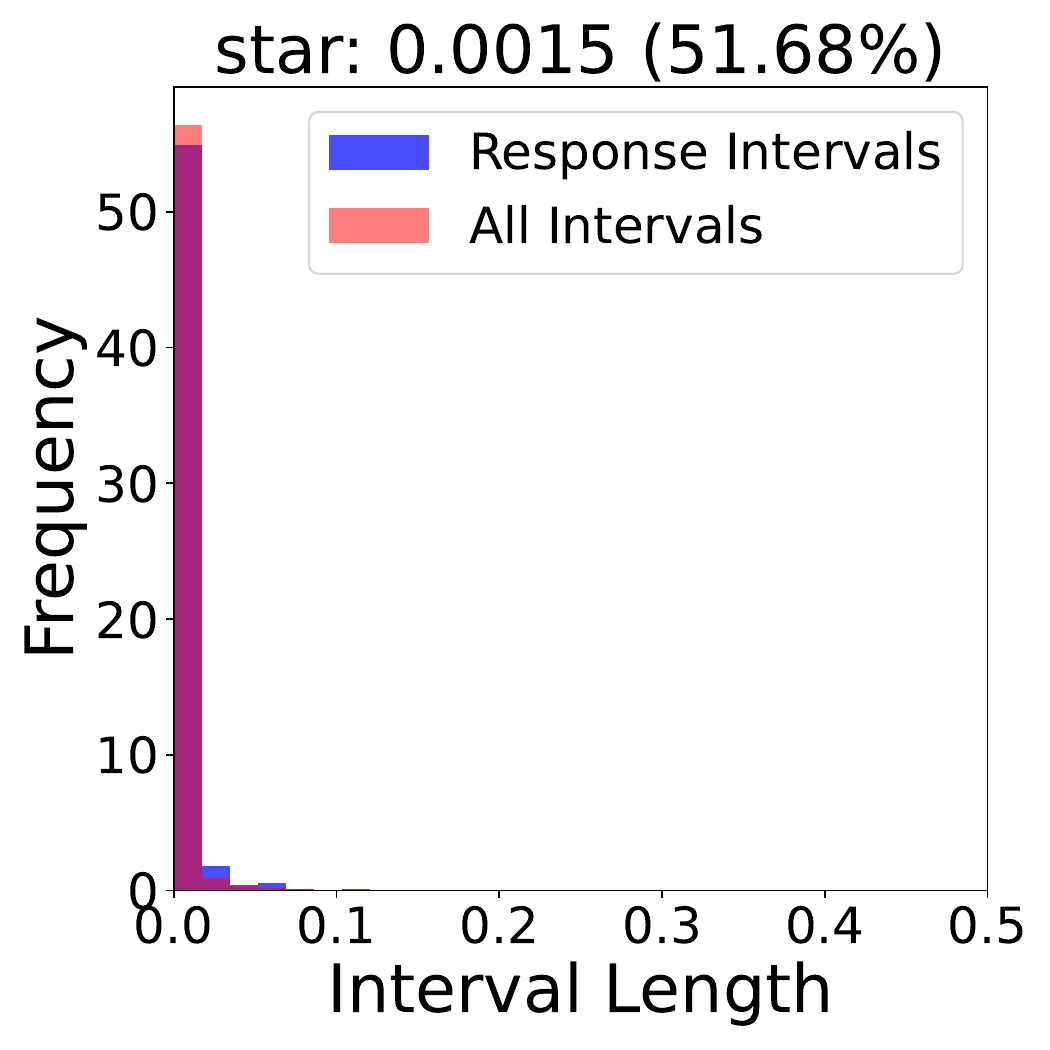}
    \end{tabular}
    \caption{Comparison of interval lengths across datasets: Each subfigure shows the distribution of interval lengths for a specific dataset. The blue histogram represents the intervals containing the actual responses, while the red histogram shows all intervals from the multi-output quantile regression model on the test set.}
    \label{fig:interval_length_comparison}
\end{figure}

\section{Conclusion}\label{sec:conclusion}

Conformal Thresholded Intervals (CTI) is a groundbreaking conformal prediction method specifically designed for regression tasks. It aims to construct the smallest possible prediction set while maintaining a guaranteed coverage level. The key innovation behind CTI lies in its clever utilization of multi-output quantile regression and the strategic thresholding of the estimated conditional interquantile intervals based on their length. By doing so, CTI effectively adapts to the local density of the data, enabling it to generate general prediction sets that are not limited to intervals. The ability of CTI to create prediction sets that are not restricted to intervals sets it apart from other conformal prediction methods. This flexibility allows CTI to better capture the intricate patterns and structures present in the data, ultimately leading to more precise and informative prediction sets. However, it is important to note that the current implementation of CTI does not guarantee that the resulting prediction sets will always be intervals. This limitation opens up an exciting avenue for future research, where the goal would be to develop a variant of CTI that can produce interval-based prediction sets while still preserving its adaptivity and efficiency.

\section{Acknowledgments}
The work described in this paper was partially supported by a grant from City University of Hong Kong (Project No.9610639).

\appendix

\section{Proof of Theorem \ref{thm:coverage}}

The score function $s(X_i, Y_i)$ for $i\in \Ical\cup\Itest$ are also exchangeable. For any $(X,Y)$ in the test set, the rank of $s(X, Y)$ is smaller than $t$ defined in~\eqref{eq:threshold} with probability $\frac{\lceil (1+|\Ical|)(1-\alpha)\rceil}{1+|\Ical|}\ge 1-\alpha$.

\section{Proof of Proposition \ref{prop:consistency}}

Let \( t' = 1/(K t) \) in the assumption. Then, by the definition of \( t \),
\begin{align*}
&\sum_{i \in \Ical} \mathbf{1}\big\{\mu(I_{k(y_i)}(x_i)) > \frac{1}{K t'}\big\} \\
=& |\Ical| - \lceil (1 + |\mathcal{I}_{\text{cal}}|)(1 - \alpha) \rceil =: |\mathcal{I}_{\text{cal}}| \alpha'.
\end{align*}

Under the assumption, \( F_{f(Y|X)}(t') \geq \alpha' - \epsilon \).

Note that instead of imposing direct assumptions on quantile accuracy, we derive the estimation error under data-specific conditions. Applying Theorem 5 from \cite{takeuchi2006nonparametric}, 
\[
\mathbb{P}\left( |\hat{q} - q| \leq \epsilon \right) \geq 1 - \delta,
\]
given appropriate data assumptions. Here, \( \epsilon \) is as defined above. This approach substantiates the assumptions in Proposition \ref{prop:consistency} by linking them to measurable data properties.

\section{Proof of Theorem \ref{thm:set}}

     In the construction of $\mathcal C_{1-\alpha}(x)$, we consider the intervals with length$\mu(I_k(x))\le t$. Given the assumption in the theorem, and combine it with Lemma~\ref{lem:density:bound}, we have
\begin{align*}
    \min_{y\in I_k} f(y|x) \ge \frac{1-\delta_k(x)}{Kt} - \frac {L(x)t}2=:t'.
\end{align*}
By the construction of $\mathcal C_{1-\alpha}(x)$, we have the conclusion of the theorem. 

\section{Gap Between Theorem \ref{thm:coverage} and Algorithm \ref{alg:cti}}

Using Theorem 5 from \cite{takeuchi2006nonparametric}, we bound the quantile estimation error. Let $\hat{q}$ be the estimated quantile from samples, then with probability $1 - \delta$,
\[
|\hat{q} - q| \leq \epsilon,
\]
where $\epsilon = 2\mathcal{R}_n(\mathcal{F}) + \sqrt{\frac{8 \log \frac{2}{\delta}}{n}}$, with $\mathcal{R}_n(\mathcal{F})$ representing the Rademacher Complexity of the function class of the conditional quantile functions.

As the sample size $n \to \infty$, $\epsilon \to 0$. This bound ensures that the estimation error is controlled, thereby validating the claim in Theorem \ref{thm:coverage} despite potential estimation inaccuracies.

\section{Number of Interquantile Intervals $K$}

The hyperparameter $K$ influences the performance of the proposed algorithm CTI by balancing the granularity of quantile estimates and the complexity of the prediction sets. We provide bounds for $K$ from two directions based on Theorem 5 from \cite{takeuchi2006nonparametric}.

Assume that $\mathcal{H}$ is a Reproducing Kernel Hilbert Space (RKHS) with a radial basis function kernel $k$ for which $k(x,x) = 1$. Moreover, assume that for all $f \in \mathcal{F}$, we have $\|f\|_{\mathcal{H}} \leq \mathcal{C}$. From \cite{mendelson2003few}, it follows that
\[
\mathcal{R}_n(\mathcal{F}) \leq \frac{2\mathcal{C}}{\sqrt{n}}.
\]

Using Theorem 5 from \cite{takeuchi2006nonparametric}, the quantile estimation error bound is 
\[
|\hat{q} - q| \leq \epsilon,
\]
where
\[
\epsilon = 2\mathcal{R}_n(\mathcal{F}) + \sqrt{\frac{8 \log (2n)}{n}} = \mathcal{O}\left(\sqrt{\frac{\log n}{n}}\right).
\]
As the sample size $n \to \infty$, $\epsilon \to 0$ since $\sqrt{\frac{\log n}{n}} \to 0$.

Additionally, to ensure the prediction sets remain valid, it is necessary that 
\[
K \epsilon = \frac{K}{\sqrt{n}} \to 0.
\]
This condition suggests that 
\[
K = \mathcal{O}(\sqrt{n}).
\]

Based on this analysis, we recommend choosing $K$ in the order of 
\[
K = \mathcal{O}(\sqrt{n}).
\]

For CHR \cite{sesia2021conformal}, it is also suggested selecting $K$ in the order of
\[
K = \mathcal{O}(n^{\frac{1}{3}}).
\]

In our experiments, we used $K = 100$, which aligns with the recommended order and demonstrated consistent results across all the datasets.

\section{Ancillary Lemmas}

\begin{lemma}\label{lem:density:bound}
    For a Lipschitz function $f:\mathbb R\to\mathbb R$ with Lipschitz constant $L$, if $\int_a^bf(x)d\mu(x)=c$, then $f(x)\in [\frac{c}{b-a}-\frac{L(b-a)}{2}, \frac{c}{b-a}+\frac{L(b-a)}{2}]$ for all $x \in [a, b]$. 
\end{lemma}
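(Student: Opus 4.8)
The plan is to fix an arbitrary point $x_0\in[a,b]$ and to sandwich the value $f(x_0)$ using the Lipschitz bound together with the integral constraint. Writing the average value as $\bar f := c/(b-a)$, the goal reduces to the two-sided estimate $|f(x_0)-\bar f|\le L(b-a)/2$, since $x_0$ is arbitrary in $[a,b]$.

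First I would invoke the $L$-Lipschitz property in the form $f(x_0)-L|x-x_0|\le f(x)\le f(x_0)+L|x-x_0|$, valid for every $x\in[a,b]$. Integrating each of these inequalities over $[a,b]$ and using $\int_a^b f\,d\mu=c$ yields
\[
f(x_0)(b-a) - L\!\int_a^b |x-x_0|\,d\mu(x) \;\le\; c \;\le\; f(x_0)(b-a) + L\!\int_a^b |x-x_0|\,d\mu(x).
\]
Next I would evaluate the elementary integral by splitting at $x_0$, obtaining $\int_a^b |x-x_0|\,d\mu(x) = \tfrac12(x_0-a)^2 + \tfrac12(b-x_0)^2$. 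Setting $u:=x_0-a\ge 0$ and $v:=b-x_0\ge 0$ with $u+v=b-a$, the inequality $u^2+v^2\le (u+v)^2$ gives $\int_a^b|x-x_0|\,d\mu(x)\le \tfrac12(b-a)^2$. Substituting this into the displayed chain and dividing through by $b-a>0$ produces $\bar f - \tfrac{L(b-a)}{2} \le f(x_0) \le \bar f + \tfrac{L(b-a)}{2}$, which is exactly the claimed inclusion.

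I do not anticipate a genuine obstacle here, as the argument is elementary real analysis; the only step needing a little care is the bound on $\int_a^b|x-x_0|\,d\mu(x)$. It is worth noting that the supremum of this integral over $x_0\in[a,b]$ is attained at the endpoints and equals $(b-a)^2/2$, so the constant $L(b-a)/2$ appearing in the statement is the best possible uniform constant over $[a,b]$ and cannot be sharpened without further localizing $x_0$ (at the midpoint it improves to $L(b-a)/4$). This lemma is then applied with $[a,b]=(\hat q_{k-1}(x),\hat q_k(x)]$, $f=f(\cdot\mid x)$, $L=L(x)$, and $c=P(Y\in I_k(x)\mid X=x)$, together with $b-a=\mu(I_k(x))\le t$, to obtain the lower bound on $\min_{y\in I_k}f(y\mid x)$ used in the Prediction Set theorem.
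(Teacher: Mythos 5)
Your proof is correct and follows essentially the same route as the paper's: apply the two-sided Lipschitz bound around an arbitrary point, integrate over $[a,b]$, and bound $\int_a^b|x-x_0|\,d\mu(x)$ by $\tfrac12(b-a)^2$. The only cosmetic difference is that you handle both bounds simultaneously and spell out the elementary integral explicitly, whereas the paper proves the lower bound and declares the upper bound analogous.
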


\begin{proof}
    Suppose $f(x')= d$ for some $x'\in[a,b]$. Then, by the Lipschitz condition, we have:
    \begin{align*}
        |f(x) - f(x')| \leq L|x - x'| \implies f(x) \leq d + L|x - x'|
    \end{align*}
    for all $x \in [a, b]$. Integrating both sides over $[a, b]$, we get:
    \begin{align*}
        \int_a^b f(x) d\mu(x) &\leq \int_a^b (d + L|x-x'|) d\mu(x) \\
        &= d(b-a) + L\int_a^b |x-x'| d\mu(x) \\
        &\leq d(b-a) + \frac{1}{2} L(b-a)^2,
    \end{align*}
    where the last inequality follows from the fact that $\int_a^b |x-x'| d\mu(x) \leq \frac{1}{2}(b-a)^2$. Since $\int_a^b f(x) d\mu(x) = c$, we have:
    \begin{align*}
        c &\leq d(b-a) + \frac{1}{2} L(b-a)^2 \\
        \implies d &\geq \frac{1}{b-a} \left(c - \frac{1}{2} L(b-a)^2\right) \\
        &= \frac{c}{b-a} - \frac{L(b-a)}{2}.
    \end{align*}
    Since $x'$ was arbitrary, this lower bound holds for all $x \in [a, b]$. The upper bound can be proved analogously.
\end{proof}

\begin{lemma}\label{lem:threshold}
Let $\mathcal C(x) = \{y: f(y|x)\ge t'\}$. Then the smallest $t'$ satisfying
\begin{align*}
    \int_{\mathcal X} \int_{\mathcal C(x)} f(y|x) d\mu(y)dP(x)\ge 1-\alpha
\end{align*}
is given by
\begin{align*}
    t' = \inf\{ t\in\mathbb R: \mathbb P(f(Y|X)\ge t)\ge 1-\alpha \}.
\end{align*}
\end{lemma}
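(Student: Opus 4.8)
The plan is to turn this set-optimization statement into a one-dimensional statement about the distribution of the scalar random variable $Z:=f(Y|X)$. First I would rewrite the coverage functional. For a fixed threshold $t'$ and the super-level set $\mathcal C(x)=\{y:f(y|x)\ge t'\}$, the inner integral is the conditional coverage,
\begin{align*}
\int_{\mathcal C(x)} f(y|x)\,d\mu(y)=\mathbb P\big(f(Y|X)\ge t' \mid X=x\big),
\end{align*}
and hence, by Tonelli's theorem (the integrand is nonnegative and jointly measurable) together with the law of total probability,
\begin{align*}
\int_{\mathcal X}\int_{\mathcal C(x)} f(y|x)\,d\mu(y)\,dP(x)=\mathbb E_X\big[\mathbb P\big(f(Y|X)\ge t'\mid X\big)\big]=\mathbb P\big(f(Y|X)\ge t'\big)=:G(t').
\end{align*}
So the constraint in the lemma is simply $G(t')\ge 1-\alpha$, and the problem reduces to locating the extremal $t'$ for which the survival function $G$ of $Z$ is at least $1-\alpha$.

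Next I would record the structural properties of $G$ that do the work. It is non-increasing; $G(t)=1$ for all $t\le 0$, since each $f(\cdot|x)$ is a probability density and hence nonnegative; $G(t)\to 0$ as $t\to\infty$; and $G$ is left-continuous, this last point by continuity from above of $\mathbb P$ applied to the nested events $\{f(Y|X)\ge t\}$ as $t$ increases to a limit. Therefore $\{t:G(t)\ge1-\alpha\}$ is an interval, and the threshold singled out in the statement is its relevant endpoint; left-continuity of $G$ is precisely what guarantees this endpoint is attained—pass to the limit along a sequence of valid thresholds converging to it—so that the super-level set at that level still has coverage at least $1-\alpha$ and the displayed formula holds with equality rather than merely as a bound. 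I would also note that this dovetails with Lemma~\ref{lem:np}: applying Neyman--Pearson with $f(\cdot|x)$ in the numerator and $g\equiv1$ in the denominator already shows the optimal $\mathcal C(x)$ must be of the form $\{y:f(y|x)\ge t'\}$, and the present lemma merely identifies the correct marginal level $t'$, thereby justifying the definition~\eqref{eq:threshold:density}.

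The step I expect to be the main obstacle is the boundary analysis. The law of $f(Y|X)$ need not be atomless—for instance if $f(\cdot|x)$ is constant on a set of positive probability—so $G$ can have jumps, and some care is needed to check that the claimed extremal level is attained and delivers coverage at least $1-\alpha$ rather than falling just short of it; one also has to discard the trivial regime $t'\le 0$, where coverage is identically $1$. Everything else—the Tonelli reduction and the monotonicity of $G$—is routine.
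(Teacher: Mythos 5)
Your proof is correct and rests on the same single identity that constitutes the paper's entire proof: the coverage functional equals the survival function $G(t')=\mathbb P(f(Y|X)\ge t')$ of the scalar $Z=f(Y|X)$, obtained by writing the inner integral as a conditional probability and applying Tonelli. Where you go beyond the paper is in actually addressing the extremality claim: the paper's proof stops at the identity and never argues why the displayed $t'$ is the optimal threshold, whereas your monotonicity and left-continuity analysis (continuity from above along the nested events $\{Z\ge t_n\}$ as $t_n\uparrow t$) is precisely what shows the extremal level is attained and still delivers coverage $\ge 1-\alpha$ even when the law of $Z$ has atoms. One point you brush past rather than confront: since $\{t: G(t)\ge 1-\alpha\}$ is downward-closed (as you yourself observe via $G\equiv 1$ on $t\le 0$), the lemma's ``smallest $t'$'' and the formula $\inf\{t:\mathbb P(f(Y|X)\ge t)\ge 1-\alpha\}$ are, read literally, degenerate ($-\infty$); the meaningful quantity is the \emph{largest} valid threshold, $\sup\{t: G(t)\ge 1-\alpha\}$, which yields the smallest prediction set and is exactly the ``relevant endpoint'' your left-continuity argument shows is attained. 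Making that substitution explicit would turn your sketch into a complete and correct proof of what the lemma (and equation~\eqref{eq:threshold:density}) evidently intends; the paper's own proof is silent on this as well.
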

\begin{proof}
    By direct calculation.
    \begin{align*}
        \mathbb P(f(Y|X)\ge t)
        &=\int_{\mathcal X} \int_{\mathcal Y} \mathbf 1\{f(y|x)\ge t\} dP(y|x) dP(x)\\
        &=\int_{\mathcal X} \int_{\{f(y|x)\ge t\}} f(y|x) d\mu(y) dP(x) 
    \end{align*}
    The proof is complete. 
\end{proof}

\section{Calculation of Simulation Study}

For $\tau\in[0,1]$, let $q_\tau(x)$ be the $\tau$-th quanile of $Y|X=x$, then
\begin{align*}
    \tau = \int_{0}^{q_\tau(x)} \frac{2y}{x^2}\ dy = \frac{y^2}{x^2} |_{y=0}^{q_\tau(x)} = \frac{[q_\tau(x)]^2}{x^2}.
\end{align*}
Hence, $q_{\tau}(x) = x\sqrt{\tau}$. 
We use quantile regression forest and quantile regression neural network to estimate the quantile functions $\hat{q}$. 

\noindent
\textbf{Conformal Quantile Regression (CQR).} CQR uses
\begin{align*}
    [q_{\alpha/2}(x), q_{1-\alpha/2}(x)] 
\end{align*}
as the conditional prediction set. Its expected size is
\begin{align*}
    \int_0^1 q_{1-\alpha/2}(x) - q_{\alpha/2}(x) dx & = \left(\sqrt{1-\frac\alpha 2} - \sqrt {\frac\alpha 2}\right)\int_0^1 x dx \\ 
    &= \frac 12\left(\sqrt{1-\frac\alpha 2} - \sqrt {\frac\alpha 2}\right).
\end{align*}

\noindent
\textbf{Conformal Histogram Regression (CHR).} The ideal case of CHR uses
\begin{align*}
    [q_{\alpha}(x), q_{1}(x)] 
\end{align*}
as the conditional prediction set, which is the interval that accumulates the most density at each $x$, as shown in Figure \ref{fig:theoretical_sets}. Its expected size is
\begin{align*}
    \int_0^1 q_{1}(x) - q_{\alpha}(x) dx = \frac 12\left(1 - \sqrt {\alpha}\right).
\end{align*}

\noindent
\textbf{CTI.} CTI uses
\begin{align*}
    \{y: f(y|x)\ge t\}
\end{align*}
as the prediction set. If $t\le 2$, $f(y|x) = \frac{2y}{x^2} \mathbf 1\{(0,x)\}$ suggests that the level curve has the form $y = \frac 12 tx^2$ for $x\in(0,1)$. To determine the value of $t$, we have
\begin{align*}
    \alpha=\int_0^1\int_0^{tx^2/2} \frac {2y}{x^2}dydx = \int_0^1 \frac{t^2x^2}{4}dx = \frac{t^2}{12}.
\end{align*}
$t=\sqrt{12\alpha}$. The expected size is
\begin{align*}
    \int_0^1 x-\sqrt {3\alpha}x^2 dx = \frac 12 - \frac {\sqrt{3\alpha}}3. 
\end{align*}

\end{document}